\newtheorem{theorem}{Theorem}
\gappto{\UrlBreaks}{\UrlOrds}
\newcommand*{\addFileDependency}[1]{
  \typeout{(#1)}
  \@addtofilelist{#1}
  \IfFileExists{#1}{}{\typeout{No file #1.}}
}
\newcommand*{\myexternaldocument}[1]{%
    \externaldocument{#1}%
    \addFileDependency{#1.tex}%
    \addFileDependency{#1.aux}%
}
\title{Model updating after interventions paradoxically introduces bias}
\author[1,2,*]{James~Liley}
\author[3]{Samuel~R.~Emerson}
\author[2,4,5]{Bilal~A.~Mateen}
\author[1,2,*]{Catalina~A.~Vallejos}
\author[2,3,*]{Louis~J.~M.~Aslett}
\author[2,6,*]{Sebastian~J.~Vollmer}
\affil[1]{MRC Human Genetics Unit, University of Edinburgh, UK}
\affil[2]{The Alan Turing Institute, London, UK}
\affil[3]{Department of Mathematical Sciences, Durham University, UK}
\affil[4]{Kings College Hospital, London, UK}
\affil[5]{Wellcome Trust, London, UK}
\affil[6]{Warwick Mathematics Institute, University of Warwick, UK}
\affil[*]{Co-corresponding authors}
\begin{document}
\maketitle

\begin{abstract}

Machine learning is increasingly being used to generate prediction models for use in a number of real-world settings, from credit risk assessment to clinical decision support. Recent discussions have highlighted potential problems in the updating of a predictive score for a binary outcome when an existing predictive score forms part of the standard workflow, driving interventions. In this setting, the existing score induces an additional causative pathway which leads to miscalibration when the original score is replaced. We propose a general causal framework to describe and address this problem, and demonstrate an equivalent formulation as a partially observed Markov decision process. We use this model to demonstrate the impact of such `naive updating' when performed repeatedly. Namely, we show that successive predictive scores may converge to a point where they predict their own effect, or may eventually tend toward a stable oscillation between two values, and we argue that neither outcome is desirable. Furthermore, we demonstrate that even if model-fitting procedures improve, actual performance may worsen. We complement these findings with a discussion of several potential routes to overcome these issues.

\vspace{5pt}
\emph{Note: Sections of this preprint on `Successive adjuvancy' (section~\ref{sec:successive_adjuvancy}, theorem~\ref{thm:successive_adjuvancy}, figures~\ref{fig:chaos},~\ref{fig:causality_sa}, and associated discussions) were not included in the originally submitted version of this paper due to length. This material does not appear in the published version of this manuscript, and the reader should be aware that these sections did not undergo peer review.}
\end{abstract}

\clearpage

\section{Introduction}

A common machine learning task concerns the prediction of an outcome $Y$ given a known set of predictors $X$~\citep{friedman01}. Usually, the intent is to anticipate the value of $Y$ in situations in which only $X$ is known. Often, the ultimate goal is to avoid or encourage certain values of $Y$, with interventions guided by the predictions provided by the algorithm.

We focus on the standard setting, often seen in healthcare, where $X$ is first observed and used to make predictions about $Y$, then interventions occur before outcomes are observed.
This setting can lead to prediction scores being `victims of their own success'~\citep{lenert19,sperrin19}. Interventions driven by the score can change the distribution of the data and outcomes, leading to a decay in observed performance, particularly if the intervention is successful. Analysis of this effect requires consideration of the causal processes governing $X$, $Y$, and the potential interventions driven by the  score~\citep{sperrin19}. Predictive scores are often implemented by direct dissemination to agents that are capable of modifying these causal processes~\citep{rahimian18,hyland20}, which leads to vulnerability to this problem. This problem also exist if predictions influence discrete actions, initial progress for this has been made using bandits \citep{Shi2020-na}. The phenomenon in which a predictive model influences its own effect has been called `performative prediction'~\citep{perdomo20}, and is of interest in model fairness~\citep{liu18,elzayn19}, in that actions taken in response to a model may pervert fairness metrics under which the model was designed.

This problem is particularly critical in settings where existing predictive scores are to be replaced by an updated version. In many real-world contexts, the underlying phenomena represented by the predictive model will change over time~\citep{wallace14}; statistical procedures for prediction may also improve (particularly for complex tasks); and researchers may wish to include further predictors or increase the scope of predictive scores. In general, we may expect that most predictive algorithms will need to be updated or replaced over time. Up-to-date models should generally be trained on the most recent available data which, as described above, will be contaminated by interventions based on existing scores. Should a new predictive model be fitted to new observations of $X$ and $Y$, it will consequently also model the impact of the existing score. Removal of the existing score will introduce bias into predictions made by the new score, as will insertion of the new score in place of the old. We term such an operation a `naive model replacement'. 

Our main aim is to introduce a general causal framework under which this phenomenon can be quantitatively studied. We use this framework to draw attention to the hazards of naive model replacement, especially when it occurs repeatedly. We introduce these hazards in the context of a generalised ultimate aim of the model, formulated as a constrained optimisation problem in which the occurrence of undesirable values of $Y$ is to be minimised with limited intervention. 
We also use our model to describe a second replacement strategy, `successive adjuvancy', in which new predictive scores are `added' to previous scores, with different emergent properties. 

A simple parable of this phenomenon concerns yearly influenza vaccinations. In a vaccination-naive population, risk assessments for influenza motivate widespread vaccination. However, in a later `epoch', the risk may appear much lower, and could naively suggest vaccination is no longer required introducing risks to public health\footnote{See for example \url{https://www.who.int/news-room/spotlight/ten-threats-to-global-health-in-2019}}. 
More generally, updated risk scores for clinical outcomes may be biased due to the interventions motivated by the scores themselves. 
As a second example, consider risk scores used to predict future emergency hospital admissions $Y$, on the basis of covariates $X$~\citep{rahimian18}. Suppose that prescription of some drug $D \in X$ confers increased risk, and this is established by the risk score. Should such risk scores be distributed at time $t=0$ to agents able to modify these factors (e.g., doctors), they may intervene by taking patients off $D$ thereby reducing emergency admission risk $\mathbb{E}[Y]$ at a time $t=1$. If a new score is naively fitted to $X$ at $t=0$ and $Y$ at $t=1$, it would underestimate the danger of $D$. 

Section~\ref{sec:model} describes the problem in terms of causal effects. We develop this into a full model specification in Section~\ref{sec:general}, along with a description of the constrained optimisation problem the model/intervention pair aims to solve in~\ref{sec:aim}. In Section~\ref{sec:naiveupdating}, we analyse the short and long-term effects of repeated naive replacement and show that they are generally undesirable
, and in section~\ref{sec:successive_adjuvancy} we describe  successive adjuvancy and examine long-term effects in a simplified setting. 
In Section~\ref{sec:solution}, we discuss three classes of solutions: more complex modelling, routine maintenance of a `hold-out' set, and controlled interventions. In Section~\ref{sec:control} we describe a reformulation of the model as control theory problem. Finally, in Section~\ref{sec:discussion}, we discuss limitations and implications of our approach. Our supplementary material contains relevant examples and proofs, an exposition of the problem in a real-world example, and a list of open problems in this setting.

\section{Model}
\label{sec:model}

\subsection{Overview}
\label{sec:overview}

Assume that we are attempting to predict an outcome $Y$ given a known set of covariates $X$. For simplicity, we assume $Y$ is a binary (e.g.~admission versus non admission to an Intensive Care Unit) and model it as a Bernoulli random variable. If $Y = 1$ is considered to be a negative outcome, often the eventual aim is to reduce $\mathbb{P}(Y = 1 | X) = \mathbb{E}[Y|X]$; we will discuss this in Section~\ref{sec:general} once we have defined terms formally. For the moment, we assume the causal structure shown in Figure~\ref{fig:causality}. We denote by $\rho_0(X)$ an initial predictive model for $\mathbb{E}[Y|X]$, fitted to observations of $(X,Y)$ generated under the causal structure in Figure~\ref{fig:causality}A. During deployment, we compute $\rho_0(X)$ for all members of a population and disseminate it to \textit{agents who can intervene} on $X$ (e.g.~doctors) based on those predictions, aiming to prevent $Y = 1$. Replacing or updating $\rho_0$, will typically involve fitting a new predictive model $\rho_1(X)$ to new observations of $(X,Y)$. It is clear that while $\rho_0(X)$ is an estimator of $\mathbb{E}[Y|X]$, the new predictive function $\rho_1(X)$ is instead an estimator of
\begin{equation}
\mathbb{E}\left[Y|X,\textrm{do}\left[\rho_0(X)\right]\right]  \label{eq:f1quantity}
\end{equation}
where $\textrm{do}\left[\rho_0(X)\right]$ indicates the action `compute and disseminate $\rho_0(X)$'. Although $\rho_0(X)$ is determined by $X$, the computation $\textrm{do}\left[\rho_0(X)\right]$ makes $\rho_0$ actionable. This opens a second causal pathway from $X$ to $Y$, affecting the setting in which $\rho_1$ is fitted (Figure~\ref{fig:causality}B). If the initial score $\rho_0(X)$ is universally disseminated, the distribution of $Y$ given $X$ (without the $\textrm{do}\left[\rho_0(X)\right]$) now becomes a counterfactual which we cannot observe. 

\begin{figure}[h]
\centering
\includegraphics[width=0.5\textwidth]{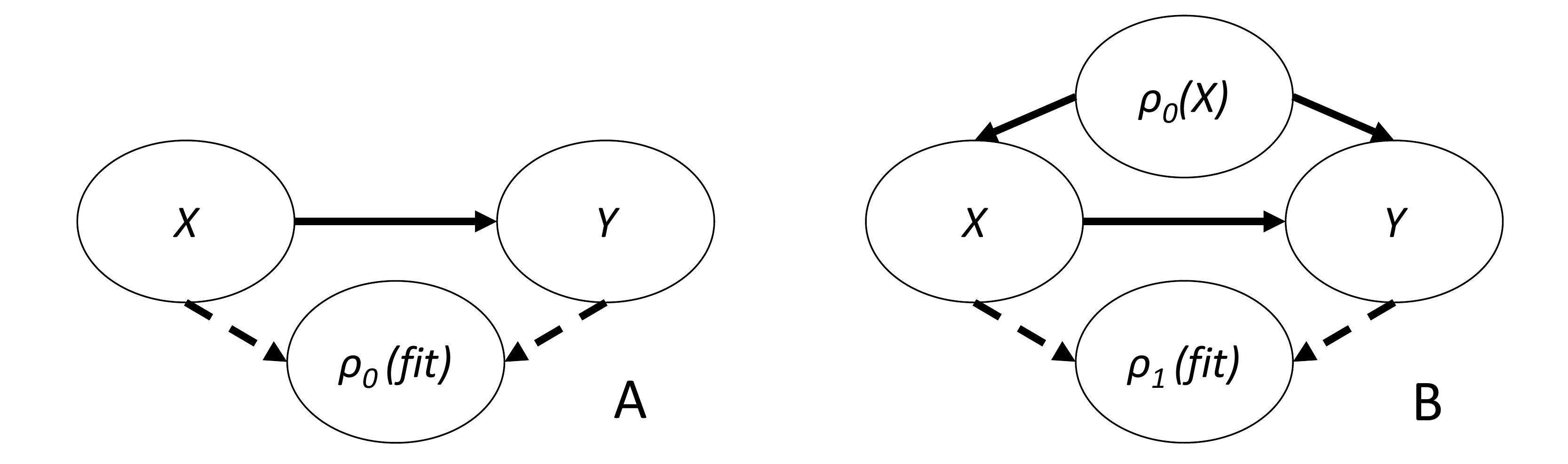}
\caption{Causal structure under which $\rho_0$ (panel A) and $\rho_1$ (panel B) are fitted. Dashed lines indicate a model-fitting process.}
\label{fig:causality}
\end{figure}

\subsection{General notation and assumptions}
\label{sec:general}

Here, we use a causal model to illustrate potential emergent behaviour resulting from repeated naive model updating, expanding out the `do'-operator used in section~\ref{sec:overview}. We do not aim to cover the complexities of \emph{all} real-world applications, yet our simplified setup is sufficient to demonstrate the dangers arising in this context. 

As $\rho_0$ is deployed and drives interventions, covariate values $X$ may change, as may the dependence of $Y$ on $X$. Here, we partition $X$ into three sets: 
\begin{align}
X^s &\textrm{: Fixed or `set' covariates; $\textrm{dim}(X^s)=p^s$}, \nonumber \\
X^a &\textrm{: Actionable covariates; $\textrm{dim}(X^a)=p^a$}, \nonumber \\
X^{\ell} &\textrm{: Latent covariates; $\textrm{dim}(X^{\ell})=p^{\ell}$}. \label{eq:saldef}
\end{align} 
Although $X^{\ell}$ may influence the causal mechanism between $X$ and $Y$ and may be intervened on, we assume it is unobserved. Hence, only $X^s$ and $X^a$ are known when evaluating a risk score, and $X^s$ cannot be intervened on (e.g.~`Age'). We also define two sets of time indicators $t,e$ (time, epoch): 
\begin{align}
t \in \{0,1\}: &\begin{cases} t=0\textrm{: predictive score is computed} \\ 
t=1\textrm{: $Y$ observed, after possible} \nonumber \\
\phantom{t=1:}\textrm{intervention} \end{cases} \nonumber \\
e \in \mathbb{N}: &\begin{cases} e=0\textrm{: no predictive score is used} \nonumber \\ e>0\textrm{: model from epoch $e-1$ is used.}  \end{cases}  
\end{align}
We assume that values of $X$ depend on $t$ and $e$ using the notation $X_e(t)=(X^s_e(t),X^a_e(t),X^{\ell}_e(t))\in \Omega^s \times \Omega^a \times \Omega^\ell = \Omega$. As $Y$ is only observed at $t=1$, $Y$ at epoch $e$ is denoted as $Y_e$. 
At each epoch, we assume that values of $X_e(t)$ across individuals in the population are $iid$ with probability measure $\mu_e$. We introduce the following functions
\begin{align}
f_e(x^s,x^a,x^{\ell}) &= \mathbb{E}\left[Y_e|X_e(1)=(x^s,x^a,x^{\ell})\right] \nonumber \\
&= \textrm{Causal mechanism determining} \nonumber \\
&\phantom{=} \textrm{probability of $Y_e = 1$ given $X_e(1)$} \nonumber \\
g^{a}_e(\rho,x^a) &\in \{g:[0,1] \times \Omega^a \to \Omega^a\} \nonumber \\
&=\textrm{Intervention process on $X^a$ in } \nonumber \\
&\phantom{=} \textrm{response to a predictive score $\rho$} \nonumber \\
&\phantom{=} \textrm{updating } X^a_e(0)\to X^a_e(1)\nonumber\\
g^{\ell}_e(\rho,x^{\ell}) &\in \{g:[0,1] \times \Omega^{\ell} \to \Omega^{\ell}\} \nonumber \\
&=\textrm{Intervention process on $X^{\ell}$ in } \nonumber \\
&\phantom{=} \textrm{response to a predictive score $\rho$} \nonumber \\
&\phantom{=} \textrm{updating } X^\ell_e(0)\to X^\ell_e(1)\nonumber\\
\rho_e(x^s,x^a) &\in \{\rho:\Omega^s \times \Omega^a \to [0,1]\} \nonumber \\
&= \textrm{Predictive score trained at epoch} \nonumber \\
&\phantom{=} \textrm{$e$, evaluated at observed covariates.} \nonumber 
\end{align}

Our main model is based on the following assumptions
\begin{enumerate}
\item $\forall e \hspace{5pt} X^s_e(0)=X^s_e(1)$: `set' covariates do not change from $t=0$ to $t=1$ \label{asm:first_main_assumption}
\item $X^a_0(0)=X^a_0(1)$, $X^{\ell}_0(0)=X^{\ell}_0(1)$: `actionable' and `latent' covariates do not change at epoch 0
\item $X^{\ell}_e(t)$ is unobserved, but may be modified from $t=0$ to $t=1$ in response to $\rho_{e-1}$
\item Values of $X_{e}(0)$ are independent across epochs, 
i.e. we do not track the same subjects over time. \label{asm:ident_dist}
\item At epoch $e$, the predictive score uses only $X^a_e(0)$,  $X^s_e(0)$ and $Y_e$ as training data; previous epochs are ignored and $X^a_e(1)$, $X^s_e(1)$ are not observed.  \label{asm:fourth_main_assumption}
\item $\forall e \hspace{5pt} \mathbb{E}[Y_e|X_e]=\mathbb{E}[Y_e|X_e(1)]$: $Y_e$ depends only on $X_e(1)$; that is, after any potential interventions. \label{asm:last_main_assumption}
\end{enumerate}
Besides these core assumptions, for the applications in this work, we variably assume some of the following
\begin{enumerate}[resume]
\item $f_e$, $g^a_e$, $g^{\ell}_e$ and $\mu_e$ remain fixed across epochs\footnote{In practice, we may assume $f_e$ changes slightly between epochs, but that this change is negligible.}, so values $\{ X^s_{\cdot}\}$ are \emph{iid}, as are $\{ X^a_{\cdot}\}$ and $\{ X^{\ell}_{\cdot}\}$ (within an epoch they may be correlated). Where we make this assumption, we will omit the epoch subscript for clarity. We also use the shorthand  $X^{\ell} \equiv X^{\ell}_e(0)|(X^s_e(0),X^a_e(0))=(x^s,x^a)$ \label{asm:equally_distributed} 

\item We allow $\rho_e$ to be an arbitrary function, but generally presume it is an estimator of
\begin{align}
&\rho_e(x^s,x^a) \approx \mathbb{E}\left[Y_e|X^s_e(0)=x^s,X^a_e(0)=x^a\right] \nonumber \\
&=\mathbb{E}_{X^{\ell}} \left[f_e\left(x^s,g^a_e(\rho_{e-1},x^a),g^{\ell}_e(\rho_{e-1},X^{\ell})\right)\right] \nonumber \\
&\triangleq \tilde{f}_e(x^s,x^a) \label{eq:rho_oracle} 
\end{align}
noting that $\tilde{f}_e$ depends on $e$ even if $f_e$ does not.
\item The function $f_e$ is $C^1$ in all arguments, and covariates are coded such that increases in covariate values increase risk \label{asm:fderiv}
\item $g^{\ell}_e$, $g^a_e$ are $C^1$ in all arguments, and a higher value of $\rho$ means a larger intervention is made (we assume $g^{\ell}_e$ and $g^a_e$ to be deterministic, but random valued functions may more accurately capture the uncertainty linked to real-world interventions).\label{asm:gderiv}
\end{enumerate}

This extended causal model is shown in Figure~\ref{fig:diagram_setup}. To aid interpretation, a real-world example is described using this notation in Supplementary Section~\ref{supp_sec:realistic_exposition}.

\subsection{Aim of predictive score}
\label{sec:aim}

The aim of the predictive score is generally to estimate $\mathbb{E}[Y_e|X_e(0)]$ accurately, presuming that we take $X_e(0)$ to be identically distributed over the population concerned. However, if action is to be taken on the score, we may presume the ultimate goal is to minimise $\mathbb{E}[Y_e]$, i.e. minimising
\begin{align}
&\mathbb{E}\left[Y_e\right] = \mathbb{E}_{X_e(0)}\left[ Y_e|X_e(1) \right] \nonumber \\ 
&=  \mathbb{E}_{X_e(0)}\left[f_e(X^s,g^a_e(\rho,X^a_e(0)),g^{\ell}_e(\rho,X^{\ell}_e(0)))\right] \label{eq:minimisethis}
\end{align}

However, we presume that we cannot afford to maximally intervene in all cases. Suppose the cost of lowering $X^a$ and $X^{\ell}$ by $x$ is $c^a(X^a,x)$ and $c^{\ell}(X^{\ell},x)$, respectively. The total intervention must then satisfy
\newpage
\begin{align}
&\mathbb{E}_{X_e(0)}\left[c^a\Big(X^a_e(0),X^a_e(0)-g^a_e(\rho,X^a_e(0))\Big) + \right. \nonumber \\
&\phantom{\mathbb{E}_{X_e(0)}} \left.  c^{\ell}\Big(X^{\ell}_e(0),X^{\ell}_e(0)-g^{\ell}_e(\rho,X^{\ell}_e(0))\Big)\right] \leq C \label{eq:subjecttothis}
\end{align}
for a known constant $C$, representing maximum cost. Thus we want to minimise~\eqref{eq:minimisethis} subject to~\eqref{eq:subjecttothis}. We have allowed $f_e$, $\mu_e$, $g^a_e$, $g^{\ell}_e$ and $\rho_e$ to vary across epochs. Of these, we can consider $f_e$ and $\mu_e$ to vary as a consequence of underlying processes, and $g^a_e$, $g^{\ell}_e$ and $\rho_e$ to be (somewhat) under our control.
Depending on the problem, we may either
consider $g^a_e$ and $g^{\ell}_e$ as fixed, and choose an optimal function $\rho_e$; or consider $\rho_e$ as fixed, and choose optimal functions $g^a_e$, $g^{\ell}_e$. If both are optimised, this corresponds to a general problem of resource allocation; see Supplementary Section~\ref{supp_sec:optimiseboth}.

\section{Naive model updating}
\label{sec:naiveupdating}

We consider a `naive' process in which a new score $\rho_e$ is fitted in each epoch, and then used as a drop-in replacement of an existing score $\rho_{e-1}$. We show that this procedure does not generally solve the constrained optimisation problem in Section~\ref{sec:aim}, can lead to `worse' performance of `better' models, and may lead to wide oscillation of predictions for fixed inputs across epochs.

\subsection{Worse performance of better models}

Here, we show that naive updating can lead to a loss in observed performance --- even when the procedure to infer $\rho_e$ is more accurate. We adopt assumptions~\ref{asm:first_main_assumption}--\ref{asm:gderiv}, taking the approximation in equation~\eqref{eq:rho_oracle} to be imperfect. Although most model elements are conserved across epochs (assumption~\ref{asm:equally_distributed}), we presume that the procedure used to infer $\rho_{e}$ changes, leading to better estimators of the function $\tilde{f}_e$. 

At epoch $e$, the training data is denoted by $(X_e^\star,Y_e^\star)$ and consists of $n$ samples of $(X_e(0),Y_e)$, with the latent covariate information removed.
In the absence of interventions, we assert that model performance will improve over epochs. 
Since performance under non-intervention is equivalent to performance at epoch 0, this can be stated as:
\begin{align}
&\mathbb{E}_{(X_0^\star,Y_0^\star)}\left[m_{\tilde{f}_0}(\rho_{e}|X_0^\star,Y_0^\star)\right] > \nonumber \\ 
&\mathbb{E}_{(X_0^\star,Y_0^\star)}\left[m_{\tilde{f}_0}(\rho_{e+1}|X_0^\star,Y_0^\star)\right], \label{eq:true_expectation2}
\end{align}
%

\begin{figure}[p]
\centering
\includegraphics[width=0.5\textwidth]{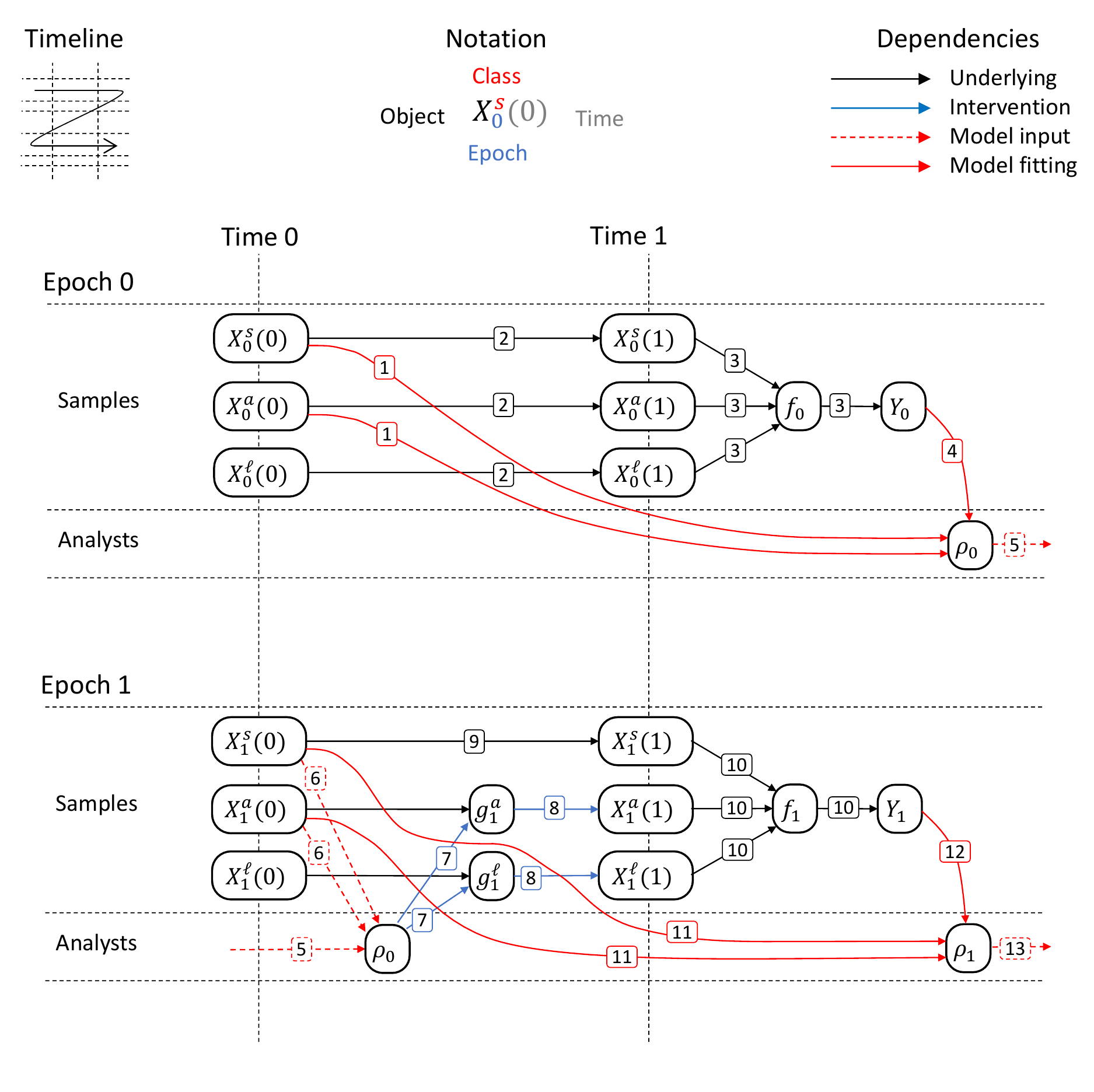}
\caption{This figure shows a causal diagram. An `epoch' is a new model fitting cycle. Covariates for a sample at the start of an epoch are modelled by $X^{\cdot}_e(0)$. We presume $\left\{X^s_e(0), e\geq 0\right\}$ are independent (as are $X^a_{\cdot}(0)$ and $X^{\ell}_{\cdot}(0)$).
We start with a sample at $t=0,e=0.$ 
The values $X^s_0(0)$, $X^a_0(0)$ are observed and sent to analysts (arrow 1). 
No predictive score is present and no interventions  are made based on it, so values remain the same to $t=1$ (arrows 2). 
$\mathbb{E}[Y_0]$ depends only on covariates at $t=1$, through $f_0$ (arrows 3). 
$Y_0$ is observed and sent to analysts (arrow 4)
who decide a function $\rho_{0}$, which is retained into epoch 1 (arrow 5). 
We start epoch 1 with a new independent sample. 
At $t=0,$ we observe $X^s_1(0)$, $X^a_1(0)$ and send them to analysts (arrow 6)
who compute $\rho_{0} \left( X^s_1(0),X^a_1(0)\right)$ which is used to inform interventions $g^a_1$, $g^{\ell}_1$
(arrow 7) 
to change values $X^a_e(0), X^{\ell}_e(0)$ to $X^a_e(1), X^{\ell}_e(1)$
respectively (arrows 8). 
$X^s_e(0)$ is not interventionable and becomes $X^s_e(1)$ (arrow
9). 
$\mathbb{E}[Y_1]$ is determined by covariates at $t=1$ (arrows
10). 
Analysts use the values of $X^s_1(0)$, $X^a_1(0)$ (arrows 11),
and $Y_1$ (arrow 12)
to decide a $\rho_{1}$, which is retained (arrow 13) for epoch 2.
Subsequent epochs proceed similarly to epoch 1.
}
\label{fig:diagram_setup}
\end{figure}

where $m_{\tilde{f}}(\rho | X,Y)$ denotes a metric for closeness of $\rho$ to $\tilde{f}$, given observed data $(X,Y)$\footnote{In practice, $m_{\tilde{f}_e}$ is unknown but (assuming latent covariates have a small influence on $f$) estimates of $m_{\tilde{f}_0}$ can be calculated through a holdout test data set.}. However, if interventions are in place, the improvement in equation~\eqref{eq:true_expectation2}, does not imply that the actual 
performance improves across epochs, that is: 
\newpage
\begin{align}
&\mathbb{E}_{(X_e^\star,Y_e^\star)}\left[m_{\tilde{f}_e}(\rho_{e}|X_e^\star,Y_e^\star)\right] \not> \nonumber \\ 
&\mathbb{E}_{(X_{e+1}^\star,Y_{e+1}^\star)}\left[m_{\tilde{f}_{e+1}}(\rho_{e+1}|X_{e+1}^\star,Y_{e+1}^\star)\right]. \label{eq:false_expectation2}
\end{align}
This is proved by counterexample: see Supplementary Section~\ref{supp_sec:models_worse}. A critical consequence of this artefact is that stakeholders may decide not to update an existing score, even if an apparently better one is available.\footnote{We note that practically (if a holdout test data set was used) the conclusions on performance made by stakeholders would be based on a risk score's closeness to $\tilde{f}_0$ instead of $\tilde{f}_e$, but the results are the same, which we show in Supplementary Section~\ref{supp_sec:models_worse}.}

\subsection{Dynamics of repeated naive updating}

Here, we analyse the dynamics of repeated naive model updating. For this purpose, we make assumptions~\ref{asm:first_main_assumption}-\ref{asm:gderiv} and assume that $\rho_e$ is an oracle: the `$\approx$' in equation~\eqref{eq:rho_oracle} is replaced by an `$=$'.

At epoch 0, there are no interventions, hence the risk of observing $Y = 1$ is $\mathbb{E}[Y_0|X_0(0)=(x^s,x^a,x^{\ell})] = f(x^s,x^a,x^{\ell})$. The score $\rho_0$ is therefore defined as
\begin{equation}
\rho_0(x^s,x^a)=\mathbb{E}_{X^{\ell}}[f(x^s,x^a,X^{\ell})], \label{eq:rho0def}
\end{equation}
where $X^{\ell}$ is denoted as in assumption~\ref{asm:equally_distributed}. In subsequent epochs, $\rho_e$ is used to modify $x^a$ and $x^{\ell}$ via $g^a$ and $g^{\ell}$, leading to the following recursive relation: 
\begin{align}
\rho_0(x^s,x^a) &= \mathbb{E}_{X^{\ell}}[f(x^s,x^a,X^{\ell})] \nonumber \\
\rho_e(x^s,x^a) &= \mathbb{E}_{X^{\ell}}[f(x^s,g^a(\rho_{e-1}(x^s,x^a),x^a), \phantom{)]} \nonumber \\
&\phantom{=\mathbb{E}_{X^{\ell}}[f(x^s,} g^{\ell}(\rho_{e-1}(x^s,x^a),X^{\ell}))] \nonumber \\
&\triangleq h(\rho_{e-1}(x^s,x^a)) \label{eq:hdef}
\end{align}
We briefly explore the dynamics of this recursion. Let $z \in [0,1]$ be arbitrary and denote by $S$ the substitution $(x^s,x^a,x^l)=\left(x^s,g^a(z,x^a),g^\ell(z,X^{\ell})\right)$. Recalling definitions of $p^s$, $p^a$ from~\eqref{eq:saldef}, we set (for $i$ across the dimensions of $(x^a, x^{\ell})$)
\begin{align}
\delta^{g^a}_i &= \frac{\partial [g^{a}(z,x^a)]_i}{\partial z}  \hspace{20pt} 
&\delta^{g^{\ell}}_i &= \frac{\partial [g^{\ell}(z,x^{\ell})]_i}{\partial z} \nonumber \\
\delta^{f^a}_i &= (\nabla f|_{S})_{p^s + i}  \hspace{20pt}
&\delta^{f^{\ell}}_i &= (\nabla f|_{S})_{p^s + p^a + i} \nonumber 
\end{align}
recalling assumptions~\ref{asm:fderiv},\ref{asm:gderiv} to assert that these partial derivatives exist. Assumptions~\ref{asm:fderiv} and~\ref{asm:gderiv} further imply $\delta^{f^{\ell}}_i>0$, $\delta^{f^a}_i > 0$ and $\delta^{g^a}_i < 0$, $\delta^{g^{\ell}}_i < 0$ respectively, so 
\begin{align}
h'(z) &= \mathbb{E}_{X^{\ell}}\left[  \sum_{i}^{p^a} \delta^{g^a}_i \delta^{f^a}_i  + \sum_{i}^{p^{\ell}} \delta^{g^{\ell}}_i \delta^{f^{\ell}}_i  \right] < 0 \label{eq:hderiv} 
\end{align}
and thus the recursion $\rho_{e+1} = h(\rho_{e})$ has exactly one fixed point. Call this $z_0$, so $z_0=h(z_0)$. We now note
\begin{theorem}
\label{thm:naive_updating_behaviour}

If $h'(z_0) \leq -1$ then the recursion does not converge unless $\rho_0=z_0$, and will tend toward a stable oscillation between two values. If for some (possibly unbounded) interval $R$ we have $\rho_e \in R$ for some $e$ and for all $z \in R$, $h(z) \in R$ and 
\begin{align}
\sum_{i}^{p^a} \left( \delta_i^{g^a} \right)^2 &\leq k_1,  & 
\sum_{i}^{p^{\ell}} \mathbb{E}_{X^{\ell}} \left[ \left( \delta_i^{g^{\ell}} \right)^2\right] &\leq k_2 \label{eq:gcond} \\
\sum_{i}^{p^a} \mathbb{E}_{X^{\ell}}\left[  |\delta_i^{f^a} |\right]^2 &\leq k_3, &
\sum_{i}^{p^{\ell}} \mathbb{E}_{X^{\ell}}\left[  \left(\delta_i^{f^{\ell}}\right)^2 \right] &\leq k_4 \label{eq:fcond}
\end{align}
where $\sqrt{k_1 k_3} + \sqrt{k_2 k_4} < 1$, then 
\begin{equation}
|\rho_e(x^s,x^a)-\rho_{e+1}(x^s,x^a)| \to 0 \nonumber
\end{equation}
as $n \to \infty$. 

\end{theorem}

This is proved in Supplementary Appendix~\ref{supp_sec:thm1proof}. 
Alternative conditions for convergence (`performative stability') are proved in~\cite{perdomo20}.

Condition~\eqref{eq:gcond} states that, on average, interventions make only small change to $x^a$ and $x^{\ell}$ in response to small changes in $\rho$. Condition~\eqref{eq:fcond} states that, on average, the actual risk changes little with small changes in covariates. These conditions are sufficient but not necessary. Since $h'(z)<0$, successive estimates of $\rho_e$ will oscillate around their limit. In general, a requirement for general convergence of $\rho_e$ restricts the type of interventions which can be in place. A simple scenario in which $\rho_e$ cannot converge is provided in Supplementary Section~\ref{supp_sec:oscillation}, and we illustrate an example showing convergence and divergence of $\rho_e$ in Figure~\ref{fig:main_example}. 
We produced a simple web app illustrating this problem at \url{https://ajl-apps.shinyapps.io/universal_replacement/}

\begin{figure}[p]
\centering
\includegraphics[width=0.7\textwidth]{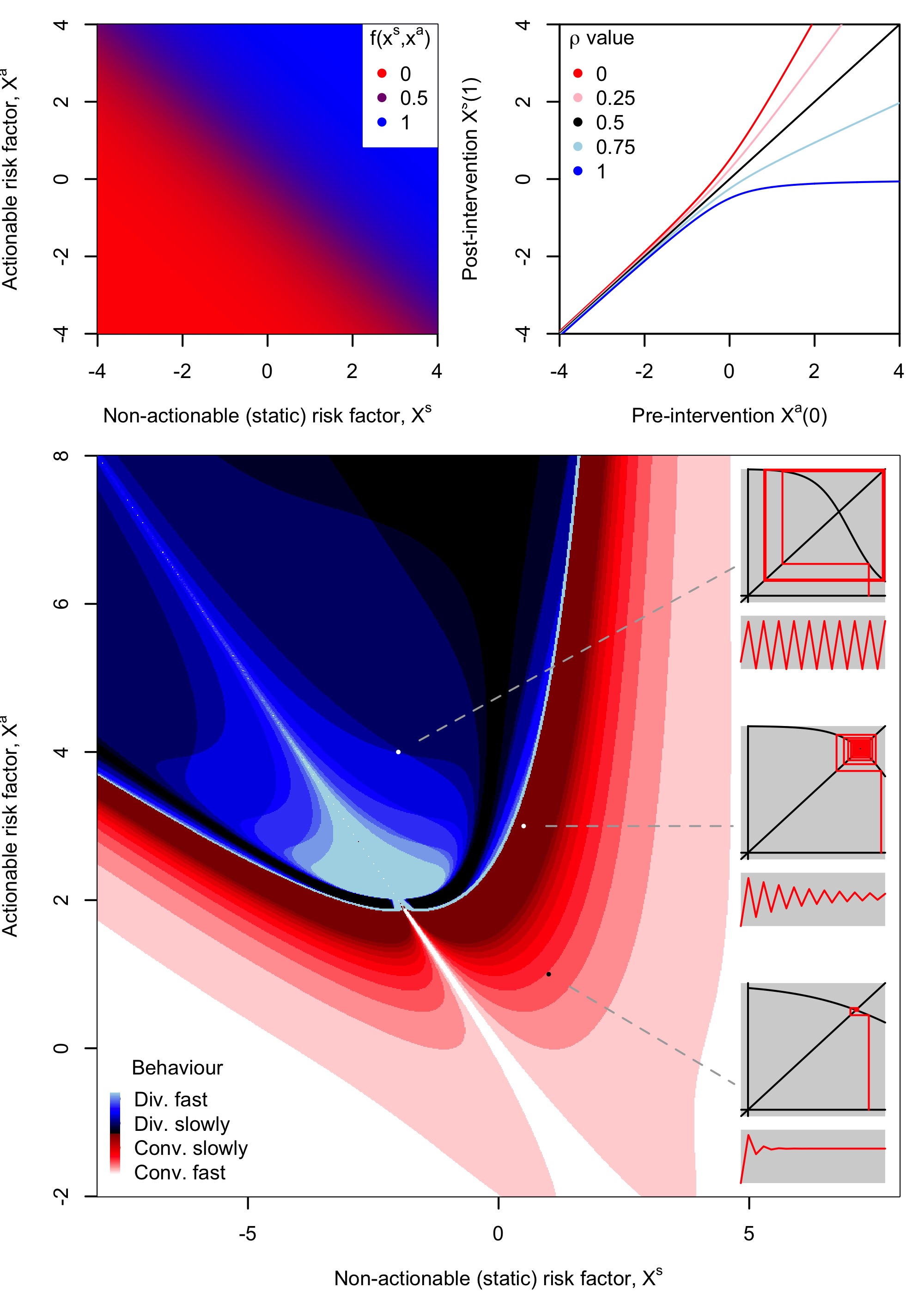}
\caption{Example showing convergence and divergence of $\rho_e$ across epochs. We disregard $x^{\ell}$, $g^{\ell}$ in this example. We choose $f(x^s,x^a)=\textrm{logit}(x^s,x^a)$ (top left). We choose $g^a$ with the rationale that we intervene by lowering $X^a(0)$ when $\rho_e> 1/2$, but allow $X^a(0)$ to increase when $\rho_e< 1/2$ (that is, resources for intervention are redistributed rather than introduced), and assume that we can intervene more effectively when $X^a(0)$ is high (
strictly, $g^a(\rho,x^a) = \frac{1}{2}\left((3-2\rho)x^a + (1-2\rho)\sqrt{1+(x^a)^2}\right)$, 
top right panel). Bottom panel shows whether $\rho_e(x^s,x^a)$ converges or diverges, and how long it takes (num. epochs until $\Delta_e \triangleq |\rho_e-\rho_{e-1}|<0.01$ or $(|\Delta_e|>0.05 \cup 
|\Delta_e-\Delta_{e-1}|<0.01)$; $|e|\leq 10$). Insets show cobweb plots for relevant recursions, and plots of $\rho_e$.}
\label{fig:main_example}
\end{figure}

We may hope that naive updating, when it converges, may solve the optimisation problem in Section~\ref{sec:aim}. It does not, and we give a specific counterexample in Supplementary Section~\ref{supp_sec:nonoptimal}. Finally, we note that the dynamics above also model a related setting, where samples are tracked across epochs and interventions are permanent (Supplementary Section~\ref{supp_sec:alternative}). In summary, naive updating can readily lead to wide oscillation of successive risk estimates, and even if $\rho_e$ does converge, the limit does not generally correspond to an optimal outcome in terms of minimising incidence of $Y$.
\section{Successive adjuvancy}
\label{sec:successive_adjuvancy}

\emph{Note: This section and associated content (theorem~\ref{thm:successive_adjuvancy}, figures~\ref{fig:chaos},~\ref{fig:causality_sa}, and associated discussions) were not included in the originally submitted version of this paper due to length. This material does not appear in the published version of this manuscript, and the reader should be aware that these sections did not undergo peer review.}
\vspace{10pt}


We propose a second strategy for updating risk scores in which interventions are `built' across successive epochs, effectively using new risk scores as adjuvants to risk scores from previous epochs, rather than replacements. 

We retain assumptions~\ref{asm:first_main_assumption} through~\ref{asm:gderiv} except assumption~\ref{asm:equally_distributed}: we assume that $f_e$ and $\mu_e$ remain fixed across epochs, but $g^a_e$ and $g^{\ell}_e$ do not. Although we no longer consider $g^a_e$ and $g^{\ell}_e$ fixed across epochs, we consider fixed functions $g^a$ and $g^{\ell}$ which will be used as `building blocks' for $g^a_e$ and $g^{\ell}_e$. In epoch $e$, we observe initial values $x_e(0)=\left(x^a_e(0),x^s_e(0),x^{\ell}_e(0)\right)=(x^a_e,x^s_e,x^{\ell}_e)$ at $t=0$, and compute $\rho_0(x^a_e,x^s_e)$, $\rho_1(x^a_e,x^s_e)$, $\dots$, $\rho_{e-1}(x^a_e,x^s_e)$. 

We build $g^a_e$, $g^{\ell}_e$ as follows. We begin by intervening on $x^s_e(0),x^{\ell}_e(0)$ according to $\rho_0$ and the building block functions $g^a$, $g^{\ell}$ to get $g^a(\rho_0,x^a_e)$, $g^l(\rho_0,x^{\ell}_e)$. We then intervene on these new values according to $\rho_1$, to get $g^a\left(\rho_1,g^a(\rho_0,x^a_e)\right)$, $g^{\ell}\left(\rho_1,g^{\ell}(\rho_0,x^{\ell}_e)\right)$. We then intervene on these values according to $\rho_2$, and so on. The intervention functions at epoch $e$ are thus defined as
\begin{align}
g^a_e(\rho,x^a) &= g^a\left(\rho_{e-1},g^a\left(\rho_{e-2},\dots, g^a(\rho_0,x^a)\dots\right)\right)\nonumber \\
g^{\ell}_e(\rho,x^{\ell}) &= g^{\ell}\left(\rho_{e-1},g^{\ell}\left(\rho_{e-2},\dots, g^{\ell}(\rho_0,x^a)\dots\right)\right)\nonumber \\
\end{align}
taking $x^s$ at some fixed value, and $\rho_0$, $\rho_1$, $\dots$, $\rho_{e-1}$ as fixed functions. We also presume again that $\rho_e$ is an oracle; that is, that the approximation in equation~\ref{eq:rho_oracle} is perfect. This enables construction of a recursive definition:
\begin{align}
g^a_0(\cdot,x^a) &= x^a \nonumber \\
g^{\ell}_0(\cdot,x^{\ell}) &= x^{\ell} \nonumber \\
\rho_0 = \rho_0(x^s,x^a) &= \mathbb{E}_{X^{\ell}} [f\left(x^s,x^a,X^{\ell}\right)] \nonumber \\
& \nonumber \\
g^a_1(\rho_0,x^a) &= g^a(\rho_0,x^a) \nonumber \\
g^{\ell}_1(\rho_0,x^{\ell}) &= g^{\ell}(\rho_0,x^{\ell}) \nonumber \\
\rho_1 = \rho_0(x^s,x^a) &= \mathbb{E}_{X^{\ell}} [f\left(x^s,g^a_1(\rho_0,x^a),g^{\ell}_1(\rho_0,X^{\ell})\right)] \nonumber \\
& \nonumber \\
g^a_{e+1}(\rho_e,x^a) &= g^a(\rho_{e},g^a_e\left(\rho_{e-1},x^a)\right) \nonumber \\
g^{\ell}_{e+1}(\rho_e,x^{\ell}) &= g^{\ell}(\rho_{e},g^{\ell}_e\left(\rho_{e-1},x^{\ell})\right) \nonumber \\
\rho_{e+1} = \rho_{e+1}(x^s,x^a) &= \mathbb{E}_{X^{\ell}} [f\left(x^s,g^a_{e+1}(\rho_e,x^a),g^{\ell}_{e+1}(\rho_e,X^{\ell})\right)] \label{eq:sarecursion}
\end{align}

\subsection{Dynamics of successive adjuvancy}

The dynamics of this system are more complex than that of naive updating. However, under much simplified circumstances: a univariate $x^a$, and disregarding $x^l$, we show the following:


\begin{theorem}
\label{thm:successive_adjuvancy}

Assume the following:
\begin{enumerate}
\item $g^{\ell}(\cdot,x^{\ell})=g^{\ell}_e(\cdot,x^{\ell})=x^{\ell}$, and $X^{\ell}_e \sim \delta_0$ (so all terms involving $\ell$ can be omitted from recursion~\ref{eq:sarecursion}) 
\item $x^a$ is univariate ($p^a=1$) \label{asm:univ}
\item $\frac{\partial}{\partial x^a}f(x^s,x^a)>0$ \label{asm:fpartial} \label{asm:partial}
\item For some unique $\rho_{eq}$ we have $\forall x \hspace{5pt} g^a(\rho_{eq},x) =x$, and $\forall (x,\rho \neq \rho_{eq}) \hspace{5pt} g^a(\rho,x)\neq x$ 
\end{enumerate}
For brevity we define $f(x)=f(x^s,x)$ and denote by $S_2$ the substitution $(\rho,x^a) \to (r,f^{-1}(r)$.  Now if, for some interval $I$ containing $\rho_{eq}$, we have $\rho_e \in I$ for some $e < \infty$, and for all $r \in I$, we have
\begin{equation}
\left|\frac{\partial}{\partial r} f(g^a(r,f^{-1}(r))) \right|  = \left|f'\left(g^a(r,f^{-1}(r)\right)\left(\frac{\frac{\partial g^a}{\partial x^a}|_{S_2}}{f'(f^{-1}(r))} + \frac{\partial g^a}{\partial r}|_{S_2} \right) \right| < 1 \label{eq:h2deriv}
\end{equation}
then
\begin{align}
\rho_e(x^s,x^a) &\to \rho_{eq} \nonumber \\
P\left(Y_e|(X^s_e(0),X^a_e(0))=(x^s,x^a)\right) &\to \rho_{eq} \nonumber \\
g^a_e(\rho_{e-1},x^a) &\to \{x: f(x^s,x)=\rho_{eq}\}= f^{-1}(\rho_{eq})
\end{align}
as $e \to \infty$. 

\end{theorem}

This is proved in Supplementary Section~\ref{supp_sec:thm2proof}.
Although limited to simplified circumstances, this results of this theorem warrant some interpretation. We may consider $\rho_{eq}$ to be an `equivocal risk': that is, a risk at which the value of $x^a$ remains the same. The theorem roughly states that, for sufficiently slowly-changing $f$ and $g^a$, interventions will build towards a point in which interventions bring everyone to almost the same (equivocal) risk level. 

For certain reasonable values of $f$ and $g^a$, including those used for figure~\ref{fig:main_example}, the derivative of $h_2$ can change sign, leading to chaotic behaviour of $\rho_e$ (figure~\ref{fig:chaos}).

\begin{figure}[h]
\centering
\includegraphics[width=0.45\textwidth]{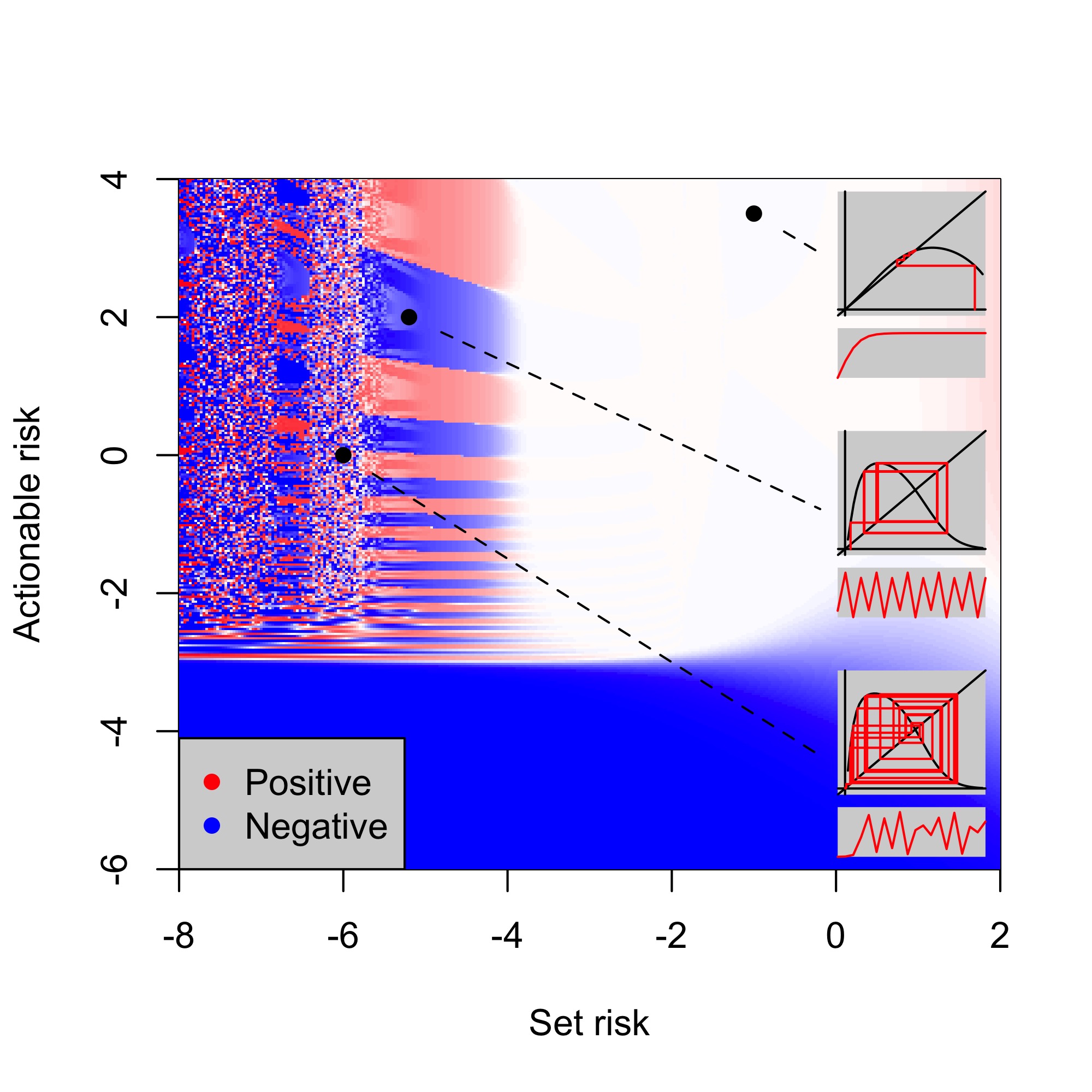}
\includegraphics[width=0.45\textwidth]{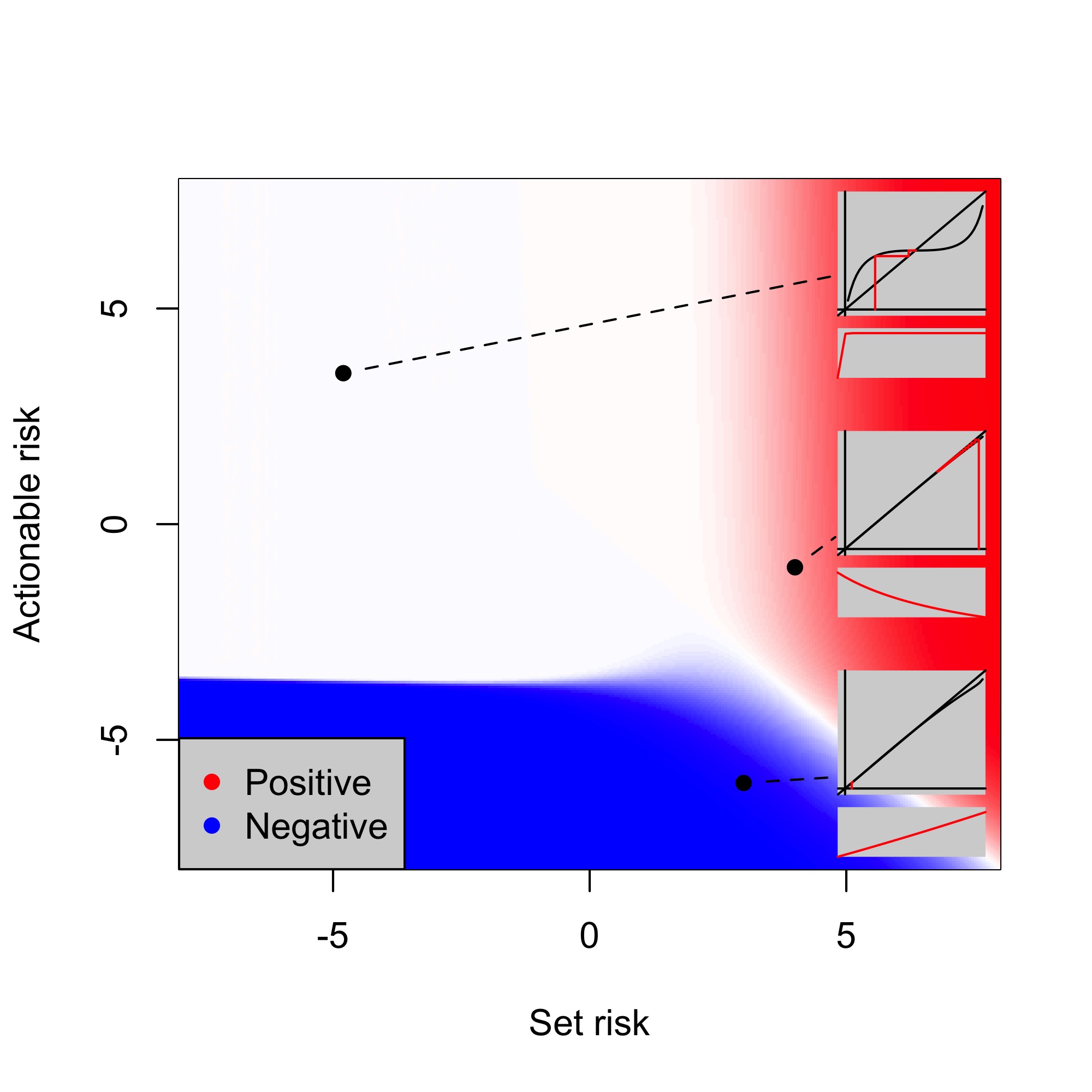}
\caption{Dynamics of successive adjuvancy. In both panels, $f(x^s,x^a)=\textrm{logit}(x^s+x^a)$, we have $\rho_{eq}=1/2$ and the colour indicates the difference $\rho_e-\rho_{eq}$ for $e=20$. The left panel shows dynamics in which $g^a$ has the same form as for figure~\ref{fig:main_example}, and can be seen to lead to chaotic behaviour of $\rho_e$ for some values of $(x^s,x^a)$. The right-hand panel uses $g^a(\rho,x^a) = x^a - 4(1-\rho)\textrm{logit}(x^a)$, and $\rho_e$ can be seen to converge to $\rho_{eq}$ everywhere, albeit at different rates.}
\label{fig:chaos}
\end{figure}

An advantage of successive adjuvancy over naive replacement is that risk scores from previous epochs $\rho_0$, $\rho_1$, $\dots$, $\rho_{e-1}$ have an immediate interpretation as  unbiased estimates of risk  estimates through the process of intervention. If we consider the interventions $g^a_e$, $g^{\ell}_e$ as a series of interventions of type $g^a$, $g^{\ell}$ applied in succession, then $\rho_0$ is the true risk ($P(Y)$) before applying $g^a$, $g^{\ell}$ at all, $\rho_1$ is the true risk after applying $g^a$, $g^{\ell}$ once in response to $\rho_0$, $\rho_2$ is the true risk after applying $g^a$, $g^{\ell}$ firstly in response to $\rho_0$ and subsequently in response to $\rho_1$, and so on. Specifically, $\rho_{e-1}$ is the risk of $Y$ immediately before applying $g^a$, $g^{\ell}$ for the final ($e$th) time. When used for this final time, $g^a$ and $g^{\ell}$ are applied in response to $\rho_{e-1}$ itself. Figure~\ref{fig:causality_sa} illustrates this idea for epochs 0 and 1 using the format of figure~\ref{fig:causality}. Seen in this way, repeatedly adjusting covariates on the basis of new risk estimates resembles a `boosting' strategy, in which each new $\rho_e$ captures the residual risk from $\rho_0$ through $\rho_{e-1}$, which seems a logical approach in a real-world situation.

\begin{figure}[h]
\centering
\includegraphics[width=0.5\textwidth]{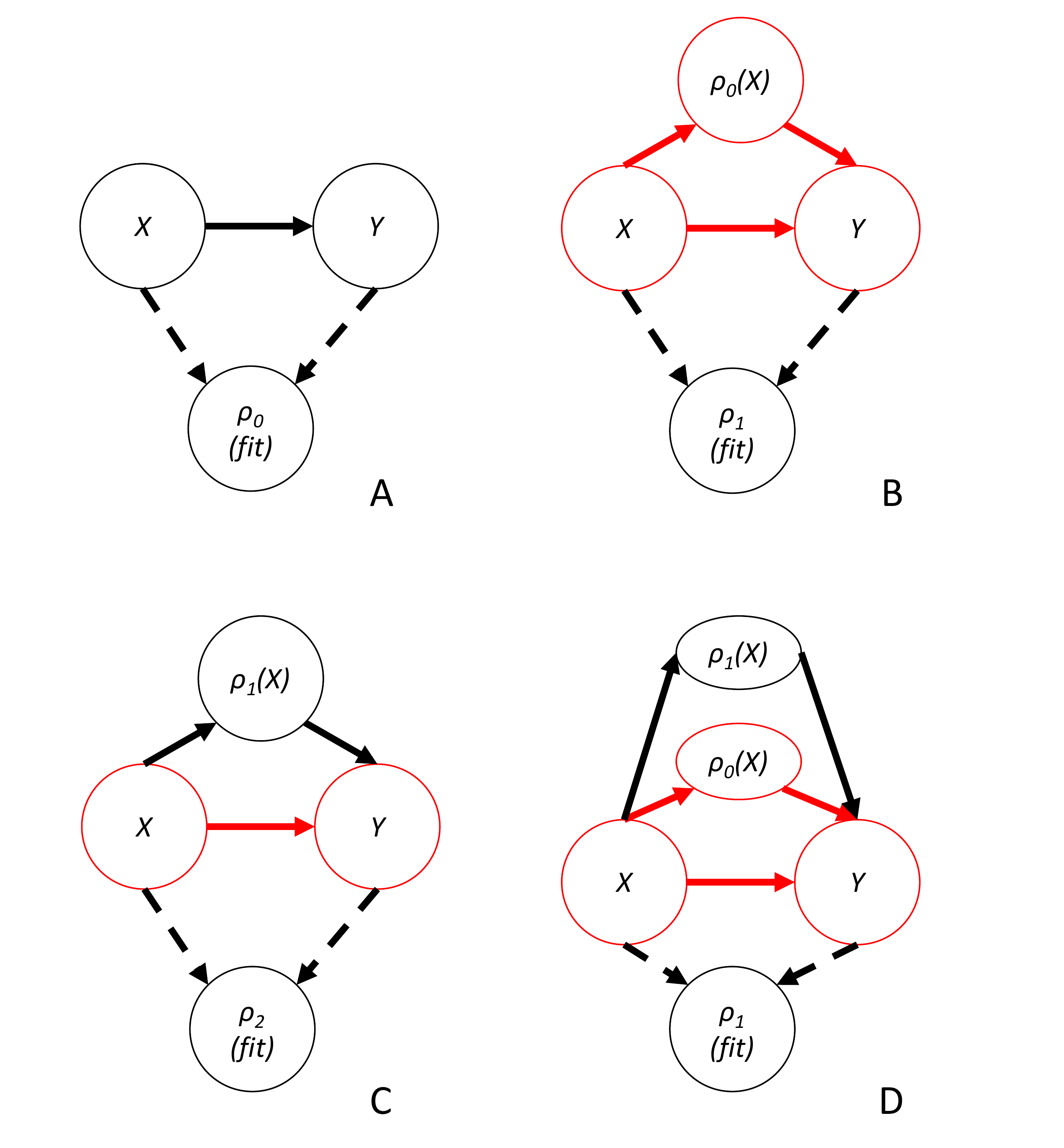}
\caption{Underlying causality structure of successive adjuvancy. Panel A shows structure in a score-naive setting where $\rho_0$ is fitted. Panel B shows the causal structure when $\rho_1$ is fitted; as for figure~\ref{fig:causality}, $\rho_1$ models a setting (coloured red) in which $\rho_0$ forms an additional causal pathway from $X$ to $Y$. Undernaive updating (panel C), $\rho_1$ replaces $\rho_0$, and is used to model a scenario distinct from that to which it was fitted. However, in successive adjuvancy (panel D), $\rho_1$ is an adjuvant to $\rho_0$, and thus still operates on the same system to which it was fitted. In panel D, $\rho_1$ is used to guide interventions after having intervened on $\rho_0$.}
\label{fig:causality_sa}
\end{figure}

An implementation of successive adjuvancy is included in our web app at \url{https://ajl-apps.shinyapps.io/universal_replacement/}.


\section{Strategies to avoid this problem}
\label{sec:solution}

Naive updating is an appropriate method for updating risk scores if no interventions are being made (that is, $g^a(\rho,x^a)=x^a$ and $g^{\ell}(\rho,x^{\ell})=x^{\ell}$), as may be the case if a risk score is used for prognosis only, rather than to guide actions\footnote{EUROscore2~\citep{nashef12} (a risk predictor for cardiac surgery) can be used in this way, by giving patients prognostic estimates but without being used to recommend for or against surgery}. It may also be appropriate if we do not aim to solve the constrained optimisation problem in Section~\ref{sec:aim}, and are only concerned with accuracy of the model: in that case, under at least the conditions of Theorem~\ref{thm:naive_updating_behaviour}, naive updating will lead to estimates $\rho_e(x^s,x^a)$ converging as $e \to \infty$ to a setting in which $\rho_e$ accurately estimates its own effect: conceptually, $\rho_e(x^s,x^a)$ estimates the probability of $Y$ \emph{after} interventions have been made on the basis of $\rho_e(x^s,x^a)$ itself~\citep{perdomo20}. Naive updating is otherwise generally not advisable, although a range of alternative modelling strategies do not lead to the same problems.

We demonstrate three general strategies for avoiding the naive updating problem below. We describe how each of these accomplishes this and compare their advantages in Supplementary section~\ref{supp_sec:solution_comparison}. We describe how an implementation of each strategy may look in the context of a toy example in supplementary section~\ref{supp_sec:solution_illustration}.

Successive adjvuancy may be an appropriate method for updating risk scores if eventual convergence can be proven and a progression of all samples towards the same risk level is be a desirable outcome. Such an outcome clearly does not generally solve the constrained optimisation problem in~\ref{sec:aim} as the cost may be arbitrarily large. Although $g^a_e$ and $g^{\ell}_e$ are variable, they are entirely built of successive applications of $g^{a}$ and $g^{\ell}$, which may not be practical. 

\subsection{More complex modelling and more data}
\label{sec:solution_modelling}

An obvious way to avoid the problem is to model the setting completely, including the effect of any interventions. Methods of this type would include explicit causal modelling, as used in related problems~\citep{sperrin18}, or counterfactual inference, which has been suggested as a direct approach to the problem~\citep{sperrin19}. These approaches would require knowledge or accurate inference of $g^{\ell}$ and $g^a$, or observation of covariates at several points in each epoch~\citep{sperrin18}. 

A second approach is to consider data from previous epochs alongside the current data when fitting $\rho_e$. Such data can be used as a prior on the fitted model~\citep{alaa18} and could be used to infer model elements: $\mu_e$, $g^{\ell}$, $g^a$, and $f$. If accurate data were available, oscillatory effects could even be detected and avoided. A difficulty with this approach in a realistic setting is in distinguishing whether inaccuracies in older models are due to drift in the underlying system \citep{Quionero-Candela2009-al} (in our case, $f$ and $\mu_e$) or due to the effects of intervention. Indeed, the problems with naive updating can be seen as treating model inaccuracies as though they are due to the first effect, when they are in fact due to the second. Definitive assertion of the cause of inaccuracies will, again, generally require more frequent observation of covariates.

\subsection{Hold out set}
\label{sec:solution_holdout}

A straightforward and potentially practical means to avoid the problems associated with naive updating is to retain a set of samples in each epoch for which $\rho_e$ is not calculated, and hence cannot guide intervention. For such samples, $X_e(0)=X_e(1)$, so a regression of $Y$ on $X_e(0)$ restricted to these `held out' samples can be used as an unbiased estimate for $f_e$. If the hold out set is randomly selected, this would emulate a \emph{clinical trial} which enables us to assess the effect of predictive scores (and their associated interventions) across epochs. 

A problem with this approach is that any benefit of the risk score-guided intervention is lost for individuals in the hold-out set. Careful consideration of the ethical consequences of this strategy is therefore required.

\subsection{Control interventions}
\label{sec:solution_control}

A radically different option is the direct specification of the interventions $g^{\ell}_e$ and $g^a_e$ in each epoch, considering $\rho_e$, $\mu_e$ constant, and $f_e$ to change only slightly with $e$. This enables directly addressing the constrained optimisation problem in Section~\ref{sec:aim}. 

If $X^{\ell}$ can be disregarded, and we may regard $f_{e-1}$ as an unbiased estimate of $f_e$\footnote{This assumption underlies the fundamental point of a risk score}, then we may take a simple inductive approach:
\begin{enumerate}
\item At the end of epoch 0, infer $f_0$ and $\mu_0$. Given some fixed functions $\rho$, $c^a$, find a function $g^a_1$ which solves the constrained optimisation problem in section~\ref{sec:aim} assuming $f_1=f_0$, $\rho_1=\rho_0$. Implement this intervention.
\item At the end of epoch $e>0$, regress $Y_e$ on 
\begin{equation}
X_e(1)=\left(X^s_e(0),g^a_e\Big(\rho(X^s_e(0),X^a_e(0)),X^a_e(0)\Big)\right) \nonumber 
\end{equation}
to attain an unbiased estimate of $f_e$. Now solve the constrained optimisation problem to optimise $g^a_{e+1}$, assuming $f_{e+1}=f_e$ and $\rho_{e+1}=\rho_{e}$
\end{enumerate}
Thus in each epoch an unbiased update of $f_e$ can be made, and the constrained optimisation problem can be directly solved.
If $X^{\ell}$ is present, the problem is more complex. We suggest this general case as an open problem (see Supplementary Section~\ref{supp_sec:open}).

A problem with this approach in a medical setting is that specification of $g^a_e$ may cause the procedure to be subject to medical device regulation~\citep{mhra19}. Implications of these regulatory processes map to our potential solutions; for example, countries in the EU~\citep{eu17} have only developed regulatory processes to the point of accommodating static risk scores, and by extension currently treat updated scores as new tools. In these cases a separate evaluation exercise, such as testing on a hold-out, is necessary to demonstrate efficacy prior to dissemination, which would also remedy the problems of naive updating (although costs of repeated formal evaluations of effectiveness, and the ethics of a hold-out, may be a concern). However, the US FDA have proposed an alternative `total-life-cycle' approach~\citep{fda19} which allows for model updating (contingent on defining a performance monitoring mechanism), which, given the problems of naive updating, is potentially seriously flawed.

\section{Formulation as control-theoretic/ reinforcement learning problem}
\label{sec:control}

Control theory \citep{Bertsekas1995-hr} and its modern incarnation, reinforcement learning \citep{Sutton2018-ng}, study temporal problems where multiple actions are available at each time step. The aim of the field is to come up with an optimal policy either from the start or, in the partially observable case, a mechanism that quickly converges to the optimal policy. In the latter the regret is considered to be how much utility is lost compared to using the optimal policy from the start. The methods underlying this, like dynamic programming, are used in a variety of fields such as; playing go~\citep{Silver2018-alphago},  
in dynamic treatment strategy~\citep{alaa18} and mechanical and electrical engineering. 
Here we use the formulation of a Partially Observable Markov Decision Processes (POMDP) \citep{Yuksel2017-ni}, and adopt the notation from \citep{Wang_undated-wi} whereby we consider the POMDP as a 7-Tuple $\left(\mathcal{S},\mathcal{A},\mathcal{T},\mathcal{R},\Omega,\mathcal{Z},\gamma\right)$:
\begin{itemize}
\item $\mathcal{S},\mathcal{A}$ and $\Omega$ are spaces of states, actions and observations.
\item $\mathcal{T}$ is the transition kernel that describes the evolution given state and action, e.g. $s_{e+1}\sim\mathcal{T}(\cdot\mid s_{e},a_{e})$
(i.e. a set of conditional transition probabilities between states and actions).
\item $\mathcal{Z}$ is a kernel for the observation given the state, e.g. $o_{e+1}\sim\mathcal{Z}(\cdot\mid s_{e},a_{e})$\footnote{Note that here future observations depend on current states and actions and not on future states and actions}.
\item $r_{e}$ represents our reward for being in state $s$ and taking action $a$ at time (or equivalently epoch) $e$, and is sampled from $\mathcal{R}$ - i.e. $r_e\sim\mathcal{R}(s_{e},a_{e})$ 
\item $\gamma$ is a discount factor that down-weighs future rewards if $0<\gamma<1$.
\end{itemize}
A solution candidate is a policy 
$$a_{e}\sim\pi\left(\left\{ o_{s},r_{s},a_{s}\right\}_{s=1}^{e-1}\right)$$
which aims to maximise 
\[\mathbb{E} \sum_{e=1}^{M}\gamma^{e-1} r(s_e,a_e)
\]
where $M$ represents the maximum number of time/epoch steps. Other reward/utility parametrisations are possible e.g. to include a final pay off or infinite time horizon pay off. Several options for reward function construction are detailed in~\citep{liu14,yu19,wirth17}.
The beauty of this framework is the flexibility: aspects such as optimisation under uncertainty can be included by including parameters of reward, transition and observation processes into the (unobserved) state variable. 

We cast the above in this framework: 
\begin{align}
s_{e}&=\left(X_{e}(0),X_{e}(1),Y_{e}\right) \nonumber  \\
a_{e}&=\rho_e \nonumber \\
o_{e}&=\left((X_{e}^{s}(0),X_{e}^{a}(0)),Y_{e}\right) \nonumber \\
r_{e}&=\mathbb{P}\left(\bar{Y}_{e+1}\mid s_e,a_{e}\right) \nonumber
\end{align}
with $\bar{Y}$ corresponding to the rate of events in total population.

The transition kernel from $s_e$ to $s_{e+1}$ consists of; sampling $X_{e+1}(0)$ (note that this sampling is independent of $s_e$), intervening using this sample with $\rho_e$ to form $X_{e+1}(1)$, and then using these values to sample $Y_{e+1}$ from the resulting conditional distribution. Finally we note that given Assumption~\ref{asm:fourth_main_assumption} our policy $a_e \sim \pi(o_e,r_e,a_e)$ as previous epochs are ignored. Indeed, this assumption also implies that $s_{e+1},o_{e+1}$ and $r_e$ only depend on the previous state through $a_e = \rho_e$. 
In the control view point it is also easy to formulate the longitudinal problem (this corresponds to setting $X_{e+1}(0)=X_{e}(1)$).

The description above allows to use methods of the field such as Q-learning, (approximate dynamic programming), PDE-based approaches such as the Hamilton Jacobi Bellman equation and many more. 
These methods create a policy which maps the historical observations to an action (for the problem at hand a risk score function).
Most of the rigorous methods require a low dimensional state space \citep{Powell2007-xe}.

\section{Discussion}
\label{sec:discussion}

In this work, we elaborate on the issue raised by Lenert and Sperrin~\citep{lenert19,sperrin19} and propose a framework for quantitatively modelling its effects, with a particular focus on a model which is updated repeatedly. We demonstrate some consequences of ignoring this problem, and note that they occur even in highly idealised circumstances. Although the problem can generally be avoided by more complex and complete modelling, we consider that this is often impractical: a full consideration of the setting in which a model will eventually be used is not generally considered until the model is to be implemented~\citep{lipton18}.

The formulation of the constrained optimisation problem in section~\ref{sec:aim} makes it clear that for fixed $g^{\ell}$, $g^a$, the best possible $\rho_e$ is not necessarily the oracle estimator in equation~\ref{eq:rho_oracle}. However, many machine learning models tend to focus on accurate prediction of outcomes~\citep{nashef12}, rather than directly solving problems of the type in section~\ref{sec:aim}; hence, the naive updating setting considers a $\rho_e$ which does exactly this. In the naive updating setting, we are assuming an analyst who ignores this effect.

The model presented here is not a full description of modern predictive scoring systems; however, it is extensible in various ways (some detailed in Supplementary Section~\ref{supp_sec:open}). In particular, $g^{\ell}$ and $g^a$ could be random-valued rather than deterministic. We also note that we assume a covariate value after intervention confers the same contribution to risk of $Y$ as it does when it takes the same value `naturally', which may not be realistic.

We assume we are `starting over' with new samples at the beginning of each epoch, and for naive updating, we assume that covariate values are identically distributed. The basis for this assumption is that we generally expect interventions to be zero-sum: that is, the risk score guides a redistribution of intervention rather than introduction of interventions, so the total effect on the sample population remains roughly the same in each epoch. In this assumption, we differ from that in the analysis by Lenert [2019]. We can alternatively interpret this assumption as taking all interventions as being short-term and having `worn off' by the start of the next epoch. The problem raised here also exists for the more general setting when interventions have long term effects and we consider longitudinal effects.

An important consideration in model updating is `stability' of successive predictions: in our setting, whether successive values of $\rho_e$ converge. Colloquially, we can take 'stability' to mean that if the underlying system being modelled does not change, then updating a model will leave it unchanged; the model predicts its own effect.  General conditions for stability are considered in~\cite{perdomo20}
, who differentiate between stability in which $\rho$ optimises a loss given its own effect, and `performative optimality', in which $\rho$ globally optimises a loss. 
Although we highlight that stability does not generally guarantee that the model is getting the best outcome (according to the constrained optimisation problem in section~\ref{sec:aim}), we note that stability has real-world advantages: in particular, trust in a model will generally be better if it appears to be stable.

In the setting where models change at each epoch, if $m_{\tilde{f}_{e}}$ is known at the current epoch $e$, we note a fair comparison of models is one which compares models built using the training data available at the current epoch\footnote{This is not to say that the performance of models will not deteriorate over epochs, just that the issue may not lie with the model structure.}. If $m_{\tilde{f}_{e}}$ is not known, then a holdout set for test data must be used so a fair comparison can be made using an estimate of $m_{\tilde{f}_0}$ (assuming $\tilde{f}_0 \approx f$). This is because at epoch $e$ we only have access to $(X_e(0),Y_e)$ and not $X_e(1)$, and so we are not able to properly gain insight to the behaviour of $\tilde{f}_e$ needed to provide an estimate of $m_{\tilde{f}_e}$. An attempt to estimate $m_{\tilde{f}_e}$ using $(X_e(0),Y_e)$ implicitly assumes that $Y_e$ directly depends on $X_e(0)$, and as a result $\rho_e$ would appear much closer to $\tilde{f}_e$ than is the case. Put simply, by implementing naive model updating not only may performance severely worsen (even if better models were used), but in not providing a holdout test set stakeholders may not even be able to recognise that performance is worsening as the number of epochs increase.

In essence, we provide a causal framework within which to understand a crucial issue in regulation of machine learning and AI-based tools in health and further afield, demonstrating that approaches which incorporate naive updating are unlikely to be fit for purpose. Moreover, even where solutions are available to address the bias introduced by updating on `real-world' data in which outcomes represent (at least in part) the effects of an algorithm, these restrict the potential of `online' and frequently updated solutions. We hope that our work will foster discussion of this interesting problem, which is becoming increasingly  pertinent as machine-learning based predictive scores become widely used to guide decision making, and policymakers act to address how to regulate these tools to ensure safety and effectiveness.

\subsection*{Code availability}

Code to reproduce relevant plots and examples is available at~\url{github.com/jamesliley/model_updating}.

\subsubsection*{Acknowledgements}

We thank the Alan Turing Institute, MRC Human Genetics Unit at the University of Edinburgh, Durham University, University of Warwick, Wellcome Trust, Health Data Research UK, and Kings College Hospital, London for their support of the authors. 
%
%
This problem was first identified in our circumstance by LJMA. We thank Dr Ioanna Manolopoulou for helping to draw our attention to the imminence of this problem.

JL, CAV and LJMA were partially supported by Wave 1 of The UKRI Strategic Priorities Fund under the EPSRC Grant EP/T001569/1, particularly the ``Health'' theme within that grant and The Alan Turing Institute;
JL, BAM, CAV, LJMA and SJV were partially supported by Health Data Research UK, an initiative funded by UK Research and Innovation, Department of Health and Social Care (England), the devolved administrations, and leading medical research charities;
SRE is funded by the EPSRC doctoral training partnership (DTP) at Durham University, grant reference EP/R513039/1;
LJMA was partially supported by a Health Programme Fellowship at The Alan Turing Institute;
CAV was supported by a Chancellor's Fellowship provided by the University of Edinburgh. 

\clearpage

\bibliographystyle{plain}
\bibliography{references}

\clearpage

\makeatletter\@input{xx.tex}\makeatother
\makeatletter\@input{yy.tex}\makeatother
\end{document}


\maketitle

\section{Example of functions and variables in a realistic setting}
\label{supp_sec:realistic_exposition}

We consider the model proposed by~\cite{rahimian18} for prediction of emergency admission to a hospital in a given time period on the basis of electronic health records (EHRs). Such a model is not in common use in the location considered (England), so the data in the original paper is not affected by the problems we describe in the main manuscript.

For clarity\footnote{Analogous times and variables can be described for other prediction periods and updating patterns}, we presume a prediction window of ten months (February-November), and that predictions are made and distributed to primary health practitioners in January, with a new model being trained on the basis of each year's data in December, to be implemented the following January. In this setting, distribution of the score may open a second causal pathway between covariates and outcome as shown in figure~\ref{fig:causality}, and is thus susceptible to the problems of naive updating.

In this setting, variables and functions may be interpreted as follows:
%
\begin{enumerate}
\item $Y$ the event `an emergency admission in the following year'
\item $X_e(0)$ the values of all variables which affect $E(Y)$ at the time when the predictive score is computed (the start of each year)
\item An `epoch': the time in which a given model is in use; eg, each year.
\item `Time': $t=0$ when the predictive score is computed (the start of January); $t=1$ represents the time after which any interventions are made (the start of Feburary). 
\item $X^s_e$ covariates affecting $\mathbb{E}(Y)$ which are included in the predictive score but which cannot be directly modified in the time frame: age, time since most recent emergency admission
\item $X^a_e$ covariates affecting $\mathbb{E}(Y)$ included in the predictive score which can be modified in the time frame: current medications.
\item $X^{\ell}_e$ covariates affecting $\mathbb{E}(Y)$ which are not included in the predictive score, and possibly can be modified in the time frame: blood pressures, cardiac function
\item $f_e$ the underlying causal process for $Y$ given patient status; that is, the probability of admission in the subsequent year, given covariates.
\item $g^a_e$ Hypothetical prescribed interventions made on $X^a$ in response to a predictive score; for instance, reduce drug dosages. We roughtly assume that this intervention is symmetric; for a patient at low emergency risk, a higher drug dose is acceptable.
\item $g^{\ell}_e$ Hypothetical prescribed interventions made on $X^{\ell}$ in response to a predictive score; for instance, treat low or high blood pressure.
\end{enumerate}
%
It is clear that if such a risk score were used universally, and data was collected from the period in which a model was in place was then, then the data would be affected by the effect of the predictive score itself. 

The model does not fully describe this setting. The trichotomisation into $X^{\ell}$, $X^a$, and $X^s$ is not perfect; intervention on $X^L$ could also affect some variables in $X^a$ and vice versa. Interventions are likely to be random-valued to some extent.



\section{Alternative system described by naive updating}
\label{supp_sec:alternative}

We note that the definition of $h$ (equation~\eqref{eq:hdef} in main text), and hence the following comments on recursion dynamics, can be used to describe a related setting in which we track the same samples over epochs, and the effect of interventions $g^a$, $g^{\ell}$ remain in place. Formally, we retain definitions of $X^s,X^a,X^{\ell},e,t,f_e,g^a_e,g^{\ell}_e,\rho_e$ and all assumptions except~\ref{asm:ident_dist},\ref{asm:equally_distributed} from the main text. In place, we assume that $f_e$, $g^a_e$, $g^{\ell}_e$ are fixed across epochs, but instead of resampling $X_e(0)$ from $\mu_e$, we have
%
\begin{equation}
X_{e+1}(0)=X_e(1)
\end{equation}
%
thus, while values $X_0(0)$ are sampled from the distribution $\mu_0$, values $X_e(0)$ are then determined for $e>0$. We illustrate this in figure~\ref{fig:alternative}. Now formulas~\eqref{eq:rho0def}, \eqref{eq:hdef} in the main text 
will hold, and the recursion will proceed as detailed in theorem~\ref{thm:naive_updating_behaviour} in the main text

\begin{figure}[h]
\centering
\includegraphics[width=0.75\textwidth]{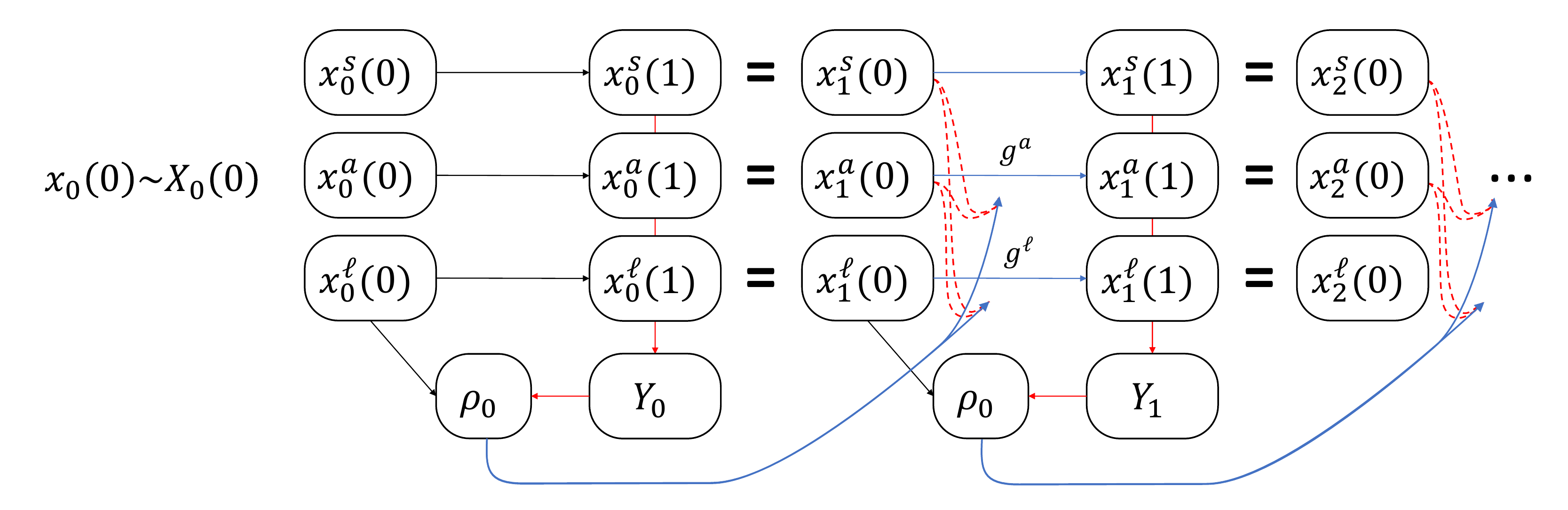}
\caption{Diagram showing alternative setup for naive updating. Values $x^s,x^a,x^{\ell}$ are sampled at $(e,t)=(0,0)$, and used to determine $\rho_0$. Values are conserved until $t=1$, and remain the same at the start of epoch 1 ($(e,t)=(1,0)$). Values are intervened on by $g^a$, $g^{\ell}$ according to $\rho_0 \left(x^s_1(0),x^a_1(0))\right)$, and resultant values at $(e,t)=(1,1)$ are conserved until the start of the next epoch at $(e,t)=(2,0)$. Lowercase leters indicates that, while quantities random-valued, they inherit all randomness from their values at $(e,t)=(0,0)$. Colour and line conventions are as for figure~\ref{fig:diagram_setup} in the main text}
\label{fig:alternative}
\end{figure}

\section{Proofs and counterexamples}
\label{supp_sec:proofs}

\subsection{Optimising both $\rho$ and $g^a$, $g^{\ell}$ is equivalent to a general resource allocation problem}
\label{supp_sec:optimiseboth}

Consider the constrained optimisation problem in section~\ref{sec:aim} in the main text. We show that if we allow $\rho$ and $g^a$, $g^{\ell}$ to vary independently, then the constrained optimisation is equivalent to the solution of a problem in which the use of a predictive score is redundant.

\begin{theorem}
Suppose that the triple 
$(\rho_{opt},g^a_{opt},g^{\ell}_{opt})$
minimises quantity~\eqref{eq:minimisethis} subject to constraint~\eqref{eq:subjecttothis} in section~\ref{sec:aim} in the main text, where all are arbitrary functions of two variables in the appropriate range. Let $h^a_{opt}$ and $h^{\ell}_{opt}$ be solutions to a second constrained optimisation problem: find $h^a(x^s,x^a)$ and $h^{\ell}(x^s,x^a,x^{\ell})$ which minimise
%
\begin{align}
\mathbb{E}_{X_e(0)}\{&f(X^s, \nonumber \\
&h^a(X^s_e(0),X^a_e(0)), \nonumber \\
&h^{\ell}(X^s_e(0),X^a_e(0),X^{\ell}_e(0)))\} \label{eq:altminimise}
\end{align}
%
subject to
%
\begin{align}
\mathbb{E}_{X_e(0)}\{ c^a(&X^a_e(0),   \nonumber \\
&X^a_e(0)-h^a(X^s_e(0),X^a_e(0))) +   \nonumber \\
c^{\ell}(&X^{\ell}_e(0), \nonumber \\
&X^{\ell}_e(0)-h^{\ell}(X^s_e(0),X^a_e(0),X^{\ell}_e(0)))\} \leq C & \label{eq:altsubject}
\end{align}
%
with $c^a,c^{\ell},f$ as for section~\ref{sec:aim}.

Then the minima of quantity~\eqref{eq:minimisethis} in the main text and of quantity~\eqref{eq:altminimise} achieved by $(\rho_{opt},g^a_{opt},g^l_{opt})$ and $(h^a_{opt},h^{\ell}_{opt})$ are the same.

\end{theorem}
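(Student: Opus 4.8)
The plan is to prove that the two minima coincide by establishing a pair of inequalities, each obtained by turning an optimiser of one problem into a feasible point of the other with the same objective value and a cost that still respects the same bound $C$. The unifying observation is that the composite maps $g^a(\rho(x^s,x^a),x^a)$ and $g^\ell(\rho(x^s,x^a),x^\ell)$ appearing in \eqref{eq:minimisethis} and \eqref{eq:subjecttothis} range, as the triple $(\rho,g^a,g^\ell)$ varies, over exactly the same class of ``direct'' allocation maps $\bigl(x^s,x^a\bigr)\mapsto h^a(x^s,x^a)$ and $\bigl(x^s,x^a,x^\ell\bigr)\mapsto h^\ell(x^s,x^a,x^\ell)$ used in \eqref{eq:altminimise} and \eqref{eq:altsubject}. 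Since $f$, $c^a$ and $c^\ell$ depend on the interventions only through these composites, the two problems induce the same achievable set of (objective, cost) pairs, and hence the same constrained minimum.

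First, to show that the minimum of \eqref{eq:altminimise} is at most the minimum of \eqref{eq:minimisethis}, I would take the optimal triple and set
$$h^a(x^s,x^a):=g^a_{opt}(\rho_{opt}(x^s,x^a),x^a),\qquad h^\ell(x^s,x^a,x^\ell):=g^\ell_{opt}(\rho_{opt}(x^s,x^a),x^\ell).$$
These are measurable functions of the stated arguments, and substituting them into \eqref{eq:altminimise} and \eqref{eq:altsubject} reproduces \eqref{eq:minimisethis} and \eqref{eq:subjecttothis} term by term. Thus $(h^a,h^\ell)$ is feasible for the second problem and attains the value of the original minimum, giving the inequality.

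Second, for the reverse inequality, I would start from $(h^a_{opt},h^\ell_{opt})$ and manufacture a score together with interventions that reproduce them. Fix a measurable injection $\rho$ from the range of $(X^s_e(0),X^a_e(0))$ into the score space and let $\rho^{-1}$ denote its left inverse on that range, returning the pair $(x^s,x^a)$. Define
$$g^a(r,x^a):=h^a_{opt}\bigl(\rho^{-1}(r)\bigr),\qquad g^\ell(r,x^\ell):=h^\ell_{opt}\bigl(\rho^{-1}(r),x^\ell\bigr),$$
so that $g^a(\rho(x^s,x^a),x^a)=h^a_{opt}(x^s,x^a)$ and $g^\ell(\rho(x^s,x^a),x^\ell)=h^\ell_{opt}(x^s,x^a,x^\ell)$ hold identically. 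Then $(\rho,g^a,g^\ell)$ is feasible for the first problem and attains the value of the minimum of \eqref{eq:altminimise}, and combining the two inequalities yields equality of the minima.

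The step I expect to be the main obstacle is the construction of $\rho$ in the second direction: it relies on the existence of a measurable injection carrying the joint information in $(x^s,x^a)$ into the score, so that $g^\ell$ — which sees only the scalar $\rho$ and $x^\ell$ — can still recover the full dependence of $h^\ell_{opt}$ on all three covariates. Whenever the covariate spaces are standard Borel (for instance Euclidean or Polish), such an injection exists by the Borel isomorphism theorem, and I would record this as the operative regularity condition; it is precisely the point at which the predictive score is seen to be redundant, since it is forced merely to encode the covariates verbatim. I would also note explicitly that the cost constraint transfers term by term in both directions, so that feasibility, and not merely the objective value, is preserved — which is what makes the two \emph{constrained} minima, rather than only the unconstrained infima, coincide.
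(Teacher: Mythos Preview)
Your proposal is correct and follows essentially the same two-direction construction as the paper: compose $(\rho,g^a,g^\ell)$ to obtain $(h^a,h^\ell)$ in one direction, and in the other direction use an injection $\rho:(x^s,x^a)\mapsto\mathbb{R}$ so that $g^a,g^\ell$ can decode $(x^s,x^a)$ from the score and evaluate $h^a_{opt},h^\ell_{opt}$. Your treatment is slightly more careful than the paper's (which simply suggests splicing decimal digits), since you invoke the Borel isomorphism theorem for the injection and explicitly note that feasibility of the cost constraint transfers in both directions.
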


\begin{proof}
Given a tuple $(\rho_{opt},g^a_{opt},g^l_{opt})$, we explicitly construct an $(h^a_{opt},h^{\ell}_{opt})$ which attains the same minimum, and vice versa.

Given $(\rho_{opt},g^a_{opt},g^l_{opt})$, the corresponding forms of $h^a_{opt}$, $h^{\ell}_{opt}$ are simply
%
\begin{align}
h^a_{opt}(x^s,x^a) &= g^a_{opt}\left(\rho(x^s,x^a),x^a\right) \nonumber \\
h^{\ell}_{opt}(x^s,x^a,x^{\ell}) &= g^{\ell}_{opt}\left(\rho(x^s,x^a),x^{\ell}\right) \nonumber \\
\end{align}

Given $h^a_{opt}$, $h^{\ell}_{opt}$, the correspondence is slightly more complex. Set $\rho_{opt}$ as a bijective function from $\mathbb{R}^{n_s+n_a}$ to $\mathbb{R}$; for instance, set it to `splice' the decimal digits of arguments together. Now set $g^a_{opt}$, $g^{\ell}_{opt}$ to firstly `decrypt' the value of $\rho_{opt}$ back into constituent parts ($x^s$ and $x^a$), and then compute $h^a_{opt}(x^s,x^a)$ and $h^{\ell}_{opt}(x^s,x^a,x^{\ell})$ as outputs.

This shows that the two constrained optimisation problems are equivalent.
\end{proof}

We note that this implies that optimising $(\rho,g^a,g^{\ell})$ jointly is equivalent to a more general treatment-allocation problem which does not involve a predictive score.

\subsection{Counterexample showing naive updating can cause better models to appear worse}
\label{supp_sec:models_worse}

For this counterexample we shall use the following set up:
\begin{align}
    f(x^s,x^a,x^\ell) =& f(x^s,x^a) = (1+e^{-x^s-x^a})^{-1}\\
    \rho_{0}(x^s,x^a \mid X^\star_{0},Y^\star_0) =&
    \begin{cases}
    \frac{\sum_{i=1}^{n}(Y^\star_0)_{i}\mathds{1}\{\sum_{j=1}^{2}(X^\star_{0})_{ij}>0\}}{\sum_{i=1}^{n}\mathds{1}\{\sum_{j=1}^{2}(X^\star_{0})_{ij}>0\}} & x^s + x^a > 0 \\
    \frac{\sum_{i=1}^{n}(Y^\star_0)_{i}\mathds{1}\{\sum_{j=1}^{2}(X^\star_{0})_{ij} \leq 0\}}{\sum_{i=1}^{n}\mathds{1}\{\sum_{j=1}^{2}(X^\star_{0})_{ij} \leq 0\}} & x^s + x^a \leq 0
    \end{cases}\\
    \rho_{1}(x^s,x^a \mid X^\star_{1},Y^\star_1) =&  (1+e^{-\hat{\beta}_{0} - x^s\hat{\beta}_{1} - x^a\hat{\beta}_{2}})^{-1} \mbox{ where } \hat{\beta} = \mbox{argmax}\{\mathcal{L}(\beta|X^\star_{1},Y^\star_1)\}\\
    m_{\tilde{f}_e}(\rho_{e}|X^\star_{e},Y^\star_e) =& \mathbb{E}_{\mu}\left[|f(X^s,g^a(\rho_{e-1},X^a))-\rho_e(X^s,X^a \mid X^\star_{e},Y^\star_e)|\right]\\
    g^{a}(\rho,x^a) =& (1-\rho)(x^a+3) + \rho(x^a-3)
\end{align}
For simplicity, we shall view the latent variables as having no effect on the true risk score $f$, which corresponds to the scenario where (if no interventions are made), it is possible with the data we observe to fully specify $f$. For the purpose of the counterexample it is reasonable to do this as model performance only requires $m_{\tilde{f}_e}$, which has no dependence on latent covariates.

We also state, that due to the omission of latent covariates, $X_{e}(0) = (X^s_e(0),X^a_e(0)) \sim N_{2}(0,I_{2})$, which is then used to generate (through the statistical program R) an initial training data set at epoch 0, of size $n = 100$, which is summarised below:

\begin{center}
 \begin{tabular}{| c |c | c | c|} 
 \hline
 index & $\mathbf{(X^{\star}_0)_{\cdot 1}}$ & $\mathbf{(X^{\star}_0)_{\cdot 2}}$ & $\mathbf{Y^\star_0}$ \\
 \hline
 1 & 1.185 & 1.272 & 1 \\ 
 \hline
 2 & 0.881 & -0.995 & 0 \\
 \hline
 3 & 0.122 & -0.956 & 0 \\
 \hline
 \multicolumn{1}{c}{}&\multicolumn{2}{c}{$\vdots$}\\
 \hline
 98 & -0.826 & 1.779 & 1 \\
 \hline
 99 & 0.853 & 0.151 & 1 \\
 \hline
 100 & 0.177 & 0.805 & 1 \\
 \hline
\end{tabular}
\end{center}

This training data can then inputted into $\rho_0$ to give the following function:
\begin{equation}\label{eq:p0initial}
    \rho_{0}(x^s,x^a \mid X^\star_{0},Y^\star_0) =
    \begin{cases}
    0.733 & x^s + x^a > 0 \\
    0.200 & x^s + x^a \leq 0
    \end{cases}
\end{equation}
When intervening on any covariates at epoch 1 the function given in equation \eqref{eq:p0initial} will be used to produce $X_1(1)$ and subsequently $Y_1$. 

We now consider $\mathbb{E}_{(X_0^\star,Y_0^\star)}\left[m_{\tilde{f}_0}(\rho_{0}|X^\star_0,Y^\star_0)\right]$, which we approximate using a Monte Carlo estimate with 1000 samples. However, $m_{\tilde{f}_0}(\rho_{0}|X^\star_{0},Y^\star_0)$ also requires approximation, and so a Monte Carlo estimate with the same number of samples is also used for this function. The procedure is as follows:
\begin{enumerate}
    \item For i from 1 to 1000:
    \begin{enumerate}
        \item Obtain a training data set , $(X^{\star}_0,Y^{\star}_0)_i$, by taking $n$ samples of $(X_0(0),Y_0)$.
        \item Use this training data set to obtain a $(\rho_0)_i$ of the form given in equation \eqref{eq:p0initial}.
        \item For j from 1 to 1000:
        \begin{enumerate}
            \item Sample $(x^s,x^a)_j \sim X_0(0)$.
        \end{enumerate}
        \item $m_{\tilde{f}_0}(\rho_{0}|(X^\star_{0},Y^\star_0)_i) \approx \frac{1}{1000}\sum_{j=1}^{1000}|f((x^s,x^a)_j) - \rho_0((x^s,x^a)_j \mid (X^\star_{0},Y^\star_0)_i)|$
    \end{enumerate}
    \item $\mathbb{E}_{(X_0^\star,Y_0^\star)}\left[m_{\tilde{f}_0}(\rho_{0}|X^\star_0,Y^\star_0)\right] \approx \frac{1}{1000}\sum_{j=1}^{1000}m_{\tilde{f}_0}(\rho_{0}|(X^\star_{0},Y^\star_0)_i)$
\end{enumerate}
With this in mind, we give the following approximation: $\mathbb{E}_{(X_0^\star,Y_0^\star)}\left[m_{\tilde{f}_0}(\rho_{0}|X^\star_0,Y^\star_0)\right] \approx 0.124$.

If we assert that interventions never take place, then we can use the same procedure described above to obtain $\mathbb{E}_{(X_0^\star,Y_0^\star)}\left[m_{\tilde{f}_0}(\rho_{1}|X^\star_0,Y^\star_0)\right] \approx 0.056$. So here we can clearly see that in the setting where interventions are never made, $\mathbb{E}_{(X_0^\star,Y_0^\star)}\left[m_{\tilde{f}_0}(\rho_{0}|X^\star_0,Y^\star_0)\right] > \mathbb{E}_{(X_0^\star,Y_0^\star)}\left[m_{\tilde{f}_0}(\rho_{1}|X^\star_0,Y^\star_0)\right]$, and so the model closer to the truth is the logistic regression model at epoch 1. If agents were allowed to make interventions (based on (\eqref{eq:p0initial})) however, we would consider $\mathbb{E}_{(X_1^\star,Y_1^\star)}\left[m_{\tilde{f}_1}(\rho_{1}|X^\star_1,Y^\star_1)\right] \approx 0.197$ instead. Now, since $\mathbb{E}_{(X_0^\star,Y_0^\star)}\left[m_{\tilde{f}_0}(\rho_{0}|X^\star_0,Y^\star_0)\right] < \mathbb{E}_{(X_1^\star,Y_1^\star)}\left[m_{\tilde{f}_1}(\rho_{1}|X^\star_1,Y^\star_1)\right]$, we would come to the incorrect conclusion that the model closer to the truth is the model used at epoch 1. Consequently we can state that, given the setup provided in section 3.1, 
\begin{align}
    &\mathbb{E}_{(X_0^\star,Y_0^\star)}\left[m_{\tilde{f}_0}(\rho_{0}|X^\star_0,Y^\star_0)\right] > \mathbb{E}_{(X_0^\star,Y_0^\star)}\left[m_{\tilde{f}_0}(\rho_{1}|X^\star_0,Y^\star_0)\right] \centernot\implies \nonumber\\
    &\mathbb{E}_{(X_0^\star,Y_0^\star)}\left[m_{\tilde{f}_0}(\rho_{0}|X^\star_0,Y^\star_0)\right] > \mathbb{E}_{(X_1^\star,Y_1^\star)}\left[m_{\tilde{f}_1}(\rho_{1}|X^\star_1,Y^\star_1)\right]
\end{align}

Additionally, we show that for this example:
\begin{align}
    &\mathbb{E}_{(X_0^\star,Y_0^\star)}\left[m_{\tilde{f}_0}(\rho_{0}|X^\star_0,Y^\star_0)\right] > \mathbb{E}_{(X_0^\star,Y_0^\star)}\left[m_{\tilde{f}_0}(\rho_{1}|X^\star_0,Y^\star_0)\right] \centernot\implies \nonumber\\
    &\mathbb{E}_{(X_0^\star,Y_0^\star)}\left[m_{\tilde{f}_0}(\rho_{0}|X^\star_0,Y^\star_0)\right] > \mathbb{E}_{(X_1^\star,Y_1^\star)}\left[m_{\tilde{f}_0}(\rho_{1}|X^\star_1,Y^\star_1)\right] \label{eq:estimatableinequalities}
\end{align}
as $\mathbb{E}_{(X_1^\star,Y_1^\star)}\left[m_{\tilde{f}_0}(\rho_{1}|X^\star_1,Y^\star_1)\right] \approx 0.215 > 0.124 \approx \mathbb{E}_{(X_0^\star,Y_0^\star)}\left[m_{\tilde{f}_0}(\rho_{0}|X^\star_0,Y^\star_0)\right]$. This statement is given here because for $\tilde{f}_0$, and therefore $m_{\tilde{f}_0}$, it is possible to gain estimates through a holdout test data set. Whilst the comparison is not between a risk score ($\rho_e$) and the function it is trying to estimate ($\tilde{f}_e$), the effect of deteriorating performance as epochs increase is still captured. Going further, it is assumed that if stakeholders were implementing naive model updating, they would assume that $\rho_e$ is estimating $\tilde{f}_0$ for all epochs as the belief is that interventions do not effect the model. Therefore, comparison with $\tilde{f}_0$ will heighten the impression to stakeholders that using an updated model structure is causing performance to deteriorate, especially for epoch 0 to epoch 1, where for this comparison $\rho_0$ is actually estimating $\tilde{f}_0$.

We expect from a stakeholders view that comparison (using estimates) between the two models at successive epochs usually leads to the inequality $m_{\tilde{f}_0}(\rho_{e-1} \mid X^\star_{e-1},Y^\star_{e-1}) < m_{\tilde{f}_0}(\rho_e \mid X^\star_e,Y^\star_e)$, and therefore the conclusion is that the new model leads to worse performance. We advise that a conclusion is only reached after further comparison is done between $m_{\tilde{f}_0}(\rho_{e-1} \mid X^\star_e,Y^\star_e)$ and $m_{\tilde{f}_0}(\rho_{e} \mid X^\star_e,Y^\star_e)$, as this gives an indication whether the drop in performance is due to the model structure or the intervention effect.

Finally, we advise caution when considering the effect of latent variables when estimating $m_{\tilde{f}_0}(\rho_{e}|X^\star_e,Y^\star_e)$. This is due to that fact that when holdout test data is used to obtain an estimate, it is an estimate of $f$ rather than an estimate of $\tilde{f}_0$. If the latent variables have a small influence on $f$ than $f \approx \tilde{f}_0$ and we can make inferences as shown above, but if latent variables have a large influence on $f$ then our comparison is not based on $m_{\tilde{f}_0}$ but instead on $m_f$. This creates a problem as now how well we perceive our model's performance can be determined largely by how well a model arbitrarily captures the latent covariate information using just the set and actionable covariates. It therefore becomes substantially more difficult to determine whether the cause of a models poor performance is due to the model, the intervention effect or insufficient data. As a general rule however, large values of $m_{\tilde{f}_0}(\rho_{0}|X^\star_0,Y^\star_0)$ should indicate that either the initial model is very poor or that there is insufficient data, but in either case careful consideration of what could possibly influence the underlying mechanism should be made before a risk score is built and given to agents, to ensure that latent variables affect the model as little as possible. 
%

\subsection{Proof of theorem~\ref{thm:naive_updating_behaviour}}
\label{supp_sec:thm1proof}

If $h'(z_0) \leq -1$ then the single fixed point of $h$ is unstable and $\rho_e$ cannot converge to it unless it was always equal to $z_0$. There can be no other $z$ with $h(z)=z_0$ since $h'(z)<0$ by assumption. 

Since $\rho_e \in [0,1]$ and $h'(z)<0$, $\rho_e$ must tend toward a stable oscillation between two values, or converge to a single value.

If the bounds on partial derivatives hold, then from the triangle and Cauchy-Schwarz inequalities, for $z \in R$
%
\begin{align}
|h'(z)| &\leq \mathbb{E}_{X^L}\left[  \sum_{i}^{p^a} |\delta^{g^a}_i \delta^{f^a}_i|  + \sum_{i}^{p^L} |\delta^{g^{\ell}}_i \delta^{f^{\ell}}_i|  \right] \nonumber \\
&= \sum_{i}^{p^a} |\delta^{g^a}_i|  \mathbb{E}_{X^{\ell}}\left[ | \delta^{f^a}_i|\right]  + \sum_{i}^{p^{\ell}} \mathbb{E}_{X^{\ell}}\left[|\delta^{g^{\ell}}_i \delta^{f^{\ell}}_i|  \right] \nonumber \\
&\leq \sqrt{\sum_{i}^{p^a} (\delta^{g^a}_i)^2 \sum_i^{p^a} \mathbb{E}_{X^{\ell}}\left[ \delta^{f^a}_i \right]^2} \nonumber \\
&\phantom{\leq} + \sqrt{\sum_{i}^{p^{\ell}} \mathbb{E}_{X^{\ell}}\left[\left(\delta^{g^{\ell}}_i \right)^2\right] \sum_{i}^{p^{\ell}} \mathbb{E}_{X^{\ell}}\left[ \left(\delta^{f^{\ell}}_i \right)^2 \right]} \nonumber \\
&\leq \sqrt{k_1 k_3} + \sqrt{k_2 k_4} < 1
\end{align}
%
so the map $h:\rho_e \to \rho_{e+1}$ is a contraction, and the convergence of the recurrence $\rho_e \to \rho_{e+1}$ follows from the Banach fixed-point theorem, as long as $\rho_e \in R$ for some value of $e$.

\subsection{Proof of theorem~\ref{thm:successive_adjuvancy} in main text}
\label{supp_sec:thm2proof}


\begin{proof}
The function $f^{-1}(x)$ is well-defined and one-to-one given assumptions~\ref{asm:univ}, \ref{asm:partial} from the theorem statement. Now
%
\begin{align}
\rho_{e+1} &= \rho_{e+1}(x^s,x^a) \nonumber \\
&= f\left(x^s,g^a_{e+1}(\rho_e,x^a)\right) \label{eq:rho_ge} \\
&= f\left(x^s,g^a(\rho_e,g^a_e(\rho_{e-1},x^a))\right) \nonumber \\
&= f\left(g^a(\rho_e,f^{-1}(\rho_e))\right) \nonumber \\
&\triangleq h_{2}(\rho_e) \label{eq:h2def}
\end{align}
%
and we have 
%
\begin{align}
h_2(\rho_{eq}) &= f\left(x^s, g^a(\rho_{eq},f^{-1}(\rho_{eq}))\right) \nonumber \\
&= f\left(x^s, f^{-1}(\rho_{eq})\right) \nonumber \\
&= \rho_{eq} \label{eq:rho_fixed_point}
\end{align}
%
so $\rho_{eq}$ is a fixed point of the recursion for $\rho_{e}$. It can be the only fixed point; by definition there is only one value of $\rho$ with $g(\rho,x)=x$, and hence for $\rho \neq \rho_{eq}$ we have $g^a(\rho,f^{-1}(\rho)) \neq f^{-1}(\rho_{eq})$. But from assumption~\ref{asm:fpartial} in the theorem statement this must mean that $h_2(\rho)=f(g^a(\rho,f^{-1}(\rho)) \neq f(f^{-1}(\rho))=\rho$.

Given the condition on the derivative of $h_2(\rho)$ for $\rho \in I$, the first result follows from the Banach fixed-point theorem. The second is immediate as the LHS is simply $\rho_e(x^s,x^a)$. The third follows from an inversion of equation~\eqref{eq:rho_ge}.
\end{proof}

\subsection{Counterexample showing failure of naive updating to generally solve constrained optimisation problem}
\label{supp_sec:nonoptimal}

For this counterexample, we do not need to consider latent covariates, and will assume they do not exist. 

Under the setting in section~\ref{sec:general} in the main text, if $\rho_n$ converges to $\rho_{\infty}(x^s,x^a)$ for some $x^s,x^a$ under naive updating, then we have
%
\begin{equation}
\rho_{\infty}(x^s,x^a)=h(\rho_{\infty}(x^s,x^a)=f(g(\rho_{\infty}(x^s,x^a),x^a),x^s) \label{eq:rhoinfinity}
\end{equation}
%
Suppose $x^s$ and $x^a$ each have dimension 1, and consider the example:
%
\begin{align}
f(x^a,x^s) &= \textrm{logit}(x^a + x^s) = \frac{1}{1+\exp\left(-(x^a + x^s)\right)} \nonumber \\
g(\rho,x^a) &= x^a - \log(1+\rho) \nonumber \\
c^a(x) &= x \nonumber
\end{align}
%
For a given function $\rho$, the objective and cost are, respectively
%
\begin{align}
\textrm{obj}\{\rho\} &= E\left\{(1+(1+\rho)\exp(-(X^s+X^a)))^{-1}\right\}\nonumber \\
\textrm{cost}\{\rho\} &= E\left\{ \log(1+\rho)\right\}
\end{align}
%
Using an oracle predictor of $Y|X$, as in the previous section, $\rho_n$ converges to the fixed point of the recursion $z \to f(g(z,x^a),x^s)$, which is
%
\begin{equation}
\rho_{\infty}(x^s,x^a) = \frac{1}{2}\left(\sqrt{\left(e^{x+y}+1\right)^2 + 4 e^{x+y}}- \left(e^{x+y}+1\right) \right)
\end{equation}
%
To see why this is not optimal, suppose $X^a,X^s$ have a discrete distribution taking either of the values $(0,-1)$, $(0,1)$ with probability $1/2$. Then
%
\begin{align}
\textrm{cost}\{\rho_{\infty}\} &= \frac{\log(2)}{2} \approx 0.346 \nonumber \\
\textrm{obj}\{\rho_{\infty}\} &= \frac{1+e}{1+e+ \sqrt{1+6e + e^2}} \approx 0.428 \nonumber
\end{align}
%
However, consider some $\rho_{0}$ with $\rho_0(0,-1)=0$, $\rho_0(0,1)=1$. Now 
%
\begin{align}
\textrm{cost}\{\rho_{0}\} &= \frac{\log(2)}{2} =  \textrm{cost}\{\rho_{\infty}\} \nonumber \\
\textrm{obj}\{\rho_0\} &= \frac{1}{2}\left(\frac{1}{1+e} + \frac{e}{2+e}\right) \approx 0.423 < \textrm{obj}\{\rho_{\infty}\}
\end{align}
%

\subsection{Simple example of updating leading to oscillation}
\label{supp_sec:oscillation}

Define $g(\rho,x^a)$ as above, and instead define 
%
\begin{equation}
f(x^a,x^s) = \textrm{logit}\left(-k(x^a+x^s)\right) \label{eq:fdef1}
\end{equation}
%
As usual, we presume that to estimate $\rho$, we regress $Y$ on $X^s_0$, $X^a_0$, and we do it accurately enough to presume $\rho$ is an oracle. Now
%
\begin{align}
h(x) &= \frac{1}{1+ (1+x)^k \exp \left(-k(x^s+x^a)\right)} \nonumber \\
h'(x) &= -k\frac{e^{k(x^s+x^a)}(1+x)^{k-1}}{\left( e^{k(x^s+x^a)}+(1+x)^k \right)^2}
\end{align}
%
Consider a setting when $x^s=x^a=0$ and $k=8$. Now $h(0)=1/2>0$ and $h(1/5) \approx 0.189 < 1/5$. For $x \in (0,1)$ we have $h'(x)<0$, so the equation $h(x)=x$ has a single solution in $(0,1/5)$. But on $(0,1/5)$, we have $h'(x)< -1$. So if $x_0$ is the unique root of $h(x)-x$ on $x \in (0,1)$ then $h'(x_0)<0$

Now as long as $\rho_0(x^s,x^a)$ is not exactly the value of $x$ for which $h(x)=x$, if we update $\rho_n$ using $h$, it can never converge as the fixed point of the map $h$ is unstable.

Conceptually, although no intervention changes $x^a$ very much, the function $f$ is very sensitive to small changes in $x^a$ when $k=8$, so a small change in $x^a$ will necessarily cause a larger change in $f(x^a,x^s)$ when $\rho$ is near the fixed point of $h$.

\section{Comparison of solution/avoidance strategies}
\label{supp_sec:solution_comparison}

We briefly compare advantages and disadvantages of the general strategies identified in section~\ref{sec:solution} to avoid or overcome problems associated with naive updating.

Any of the three strategies can be used to avoid the naive updating problem if they enable an unbiased estimate of 
%
\begin{equation}
\mathbb{E} \left[f_e\left(x^s,x^a,X^{\ell}\right)\right] \label{eq:critical_quantity}
\end{equation}
%
to be obtained, where the expectation is over $X^{\ell}$ either before or after intervention. The expectation~\eqref{eq:critical_quantity} can be recognised as the quantity for which $\rho_e$ is treated as an estimator. More frequent covariate observation as per section~\ref{sec:solution_modelling} allows this by enabling observation of $X_e(1)$, so such an unbiased estimate may be obtained by regression of $Y_e$ on observed $X_e(1)$. The strategy in  section~\ref{sec:solution_holdout} defines a hold-out subset of samples $X_e^{\star}$, $Y_e^{\star}$ for which $X_e^{\star}(1)=X_e^{\star}(0)$, so an unbiased estimate of~\eqref{eq:critical_quantity} can be obtained by regression of $Y_e^{\star}$ on (observed) $X_{e}^{\star}(0)$ will work. Finally, the strategy in section~\ref{sec:solution_control} specifies $g_e^a$ and $g_e^{\ell}$, so an unbiased estimate of~\eqref{eq:critical_quantity} can be made by regressing $Y_e$ on $X_e^S(0)$, $g_e^a(\rho_e,X_e^a(0))$.

Although all three solutions avoid the problems of naive updating, they `solve' somewhat different problems and require different experimental designs. The class of strategies described in section~\ref{sec:solution_modelling} (a range of modelling approaches generally requiring more frequent covariate observation) can solve the constrained optimisation problem in section~\ref{sec:aim} over $\rho$. The strategy described in section~\ref{sec:solution_holdout} (retention of a `hold-out' set on which no interventions are made) simply enables unbiased observation of $f_e$. The strategy described in section~\ref{sec:solution_control} (explicit control of interventions $g^a$, $g^{\ell}$) solves the constrained optimisation problem over $g^a$, $g^{\ell}$. 

However, solutions may be quantitatively compared with an aim of recommending which (if any) might be most appropriate in a given circumstance. If possible, the strategy in section~\ref{sec:solution_modelling} should be used if possible, as it enables the greatest flexibility in approach. The strategy in~\ref{sec:solution_control} should be used alternatively or additionally if appropriate.

The strategy in section~\ref{sec:solution_holdout} is advisable as a general approach if covariates cannot be observed more frequently and interventions cannot be controlled (that is, neither of the other strategies are actionable). 

\subsection{Illustration of solutions}
\label{supp_sec:solution_illustration}

We consider how each strategy may appear in the context of the setting described in Supplementary section~\ref{supp_sec:realistic_exposition}. 

The strategy in section~\ref{sec:solution_modelling} would comprise re-observing covariates in February ($t=1$) after interventions are made. Under this closer observation (allowing inference of $g^a$ and $\mathbb{E}(f)$), $\rho_e$ could be set so as to optimise healthcare provision. 

The strategy in section~\ref{sec:solution_holdout} would require nomination a random sample of the population on which scores would not be calculated, and hence on which no intervention could be made on the basis of a risk score. This would enable observation of `native' covariate effects on risk.

The strategy in section~\ref{sec:solution_control} would implement specific interventions: for instance, `if $\rho_e>50\%$, stop drug $X$'. Interventions could then be tuned to optimise healthcare provision.

\section{Open problems}
\label{supp_sec:open}

We propose the following short list of open problems in this area.
%
\begin{enumerate}
\item Determine a framework to modulate both $g^{\ell}$ and $g^a$ with the aim of solving the constrained optimisation problem in section~\ref{sec:aim} in the main text. 
\item Determine the dynamics and consequences of other  model-updating strategies. What happens if training data is aggregated at each step, rather than only the most recent data being used?
\item Derive results of successive adjuvancy in more general circumstances.
\item How do the dynamics of the model change when assumptions differ? Can $f$, $g^{\ell}$ and $g^a$ be extended to be random-valued, and possibly agglomerated into a single intervention function?
\item How can assumptions be changed to approximate more general machine learning settings?
\end{enumerate}

\bibliographystyle{plain}
\bibliography{references}

\makeatletter\@input{xx.tex}\makeatother